\theoremstyle{remark}
\newtheorem{theorem}{Theorem}
\newtheorem{lemma}{Lemma}
\newtheorem*{upper_bnd}{Upper Bound of SparseMax Operation}
\newtheorem*{lower_bnd}{Lower Bound of SparseMax Operation}
\title{\bf Sparse Markov Decision Processes with\\
Causal Sparse Tsallis Entropy Regularization\\
for Reinforcement Learning}
\author{Kyungjae Lee, Sungjoon Choi, and Songhwai Oh
\thanks{ K. Lee, S. Choi, and S. Oh are with 
       the Department of Electrical and Computer Engineering and ASRI,  
       Seoul National University, Seoul 08826, Korea
	(e-mail: \{kyungjae.lee, sungjoon.choi, songhwai.oh\}@cpslab.snu.ac.kr).
}
}
\begin{document}
\maketitle

\begin{abstract}
In this paper, a sparse Markov decision process (MDP) with novel causal sparse Tsallis entropy regularization is proposed.
The proposed policy regularization induces a sparse and multi-modal optimal policy distribution of a sparse MDP.
The full mathematical analysis of the proposed sparse MDP is provided.
We first analyze the optimality condition of a sparse MDP.
Then, we propose a sparse value iteration method which solves a sparse
MDP and then prove the convergence and optimality of sparse value
iteration using the Banach fixed point theorem. 
The proposed sparse MDP is compared to soft MDPs which utilize causal
entropy regularization. 
We show that the performance error of a sparse MDP has a constant bound,
while the error of a soft MDP increases logarithmically with
respect to the number of actions, where this performance error is
caused by the introduced regularization term. 
In experiments, 
we apply sparse MDPs to reinforcement learning problems.
The proposed method outperforms existing methods in terms of the
convergence speed and performance.
\end{abstract}

\section{Introduction}

\IEEEPARstart{M}arkov decision processes (MDPs) have been widely used as a mathematical framework 
to solve stochastic sequential decision problems, 
such as autonomous driving \cite{brechtel2014probabilistic}, path planning \cite{ragi2013path}, and quadrotor control \cite{hwangbo2017control}.
In general, the goal of an MDP is to find the optimal policy function which maximizes the expected return.
The expected return is a performance measure of a policy function
and it is often defined as the expected sum of discounted rewards.
An MDP is often used to formulate reinforcement learning (RL) \cite{kober2013reinforce}, 
which aims to find the optimal policy without the explicit specification of stochasticity of an environment, 
and inverse reinforcement learning (IRL) \cite{ng2000algorithms}, 
whose goal is to search the proper reward function that can explain
the behavior of an expert who follows the underlying optimal policy.

While the optimal solution of an MDP is a deterministic policy,
it is not desirable to apply an MDP to the problems with multiple optimal actions.
In perspective of RL, 
the knowledge of multiple optimal actions makes it possible to cope with unexpected situations.
For example, suppose that an autonomous vehicle has multiple optimal routes to reach a given goal. 
If a traffic accident occurs at the currently selected optimal route,
it is possible to avoid the accident by choosing another safe optimal
route without additional computation of a new optimal route.
For this reason, it is more desirable to learn all possible optimal
actions in terms of robustness of a policy function.
In perspective of IRL, since the experts often make multiple decisions
in the same situation, a deterministic policy has a limitation in
expressing the expert's behavior. 
For this reason, it is indispensable to model the policy function of
an expert as a multi-modal distribution.
These reasons give a rise to the necessity of a multi-modal policy model.

In order to address the issues with a deterministic policy function, a
causal entropy regularization method has been utilized 
\cite{Haarnoja2017, Heess2012, schulman2017equivalence, tokic2011value, vamplew2017softmax}.
This is mainly due to the fact that the optimal solution of an MDP
with causal entropy regularization becomes a softmax distribution
of state-action values $Q(s, a)$, 
i.e., $\pi(a|s) = \frac{\exp(Q(s,a))}{\sum_{a'}\exp(Q(s,a'))}$,
which is often referred to as a soft MDP \cite{bloem2014infinite}.
While a softmax distribution has been widely used to 
model a stochastic policy, it has a weakness
in modeling a policy function when the number of actions is large.
In other words, the policy function modeled by a softmax distribution
is prone to assign non-negligible probability mass to non-optimal
actions even if state-action values of these actions are dismissible.  
This tendency gets worse as the number of actions increases as
demonstrated in Figure \ref{fig:concept}. 

In this paper, we propose a sparse MDP by
presenting a novel causal sparse Tsallis entropy regularization method,
which can be interpreted as a special case of Tsallis generalized
entropy \cite{Tsallis1988possible}. 
The proposed regularization method has a unique property in that 
the resulting policy distribution becomes a sparse distribution. 
In other words, the supporting action set which has a non-zero probability mass contains a sparse subset of the action space. 

We provide a full mathematical analysis about the proposed sparse MDP.
We first derive the optimality condition of a sparse MDP, which is
named as a sparse Bellman equation. 
We show that the sparse Bellman equation is an approximation of the
original Bellman equation. 
Interestingly, we further find the connection between the optimality
condition of a sparse MDP and the probability simplex projection
problem \cite{wang2013projection}. 
We present a sparse value iteration method for solving a sparse MDP
problem, where the optimality and convergence are proven using the
Banach fixed point theorem \cite{smart1980fixed}.
We further analyze the performance gaps of the expected return of the
optimal policies obtained by a sparse MDP and a soft MDP compared to
that of the original MDP. 
In particular, we prove that the performance gap between the proposed
sparse MDP and the original MDP has a constant bound as the number of
actions increases, whereas the performance gap between a soft MDP and
the original MDP grows logarithmically. 
From this property, sparse MDPs have benefits over soft MDPs when it
comes to solving problems in robotics with a continuous action space. 

To validate effectiveness of a sparse MDP,
we apply the proposed method to the exploration strategy and the
update rule of Q-learning and compare to the $\epsilon$-greedy method
and softmax policy \cite{tokic2011value}. 
The proposed method is also compared to the deep deterministic policy
gradient (DDPG) method \cite{lillicrap2015continuous}, which is
designed to operate in a continuous action space without
discretization. 
The proposed method shows the state of the art performance compared to
other methods as the discretization level of an action space increases. 

\section{Background}

\subsection{Markov Decision Processes}

A Markov decision process (MDP) has been widely used to formulate a sequential decision making problem.
An MDP can be characterized by a tuple $\mathbf{M} = \{\mathcal{S}, \mathcal{F}, \mathcal{A}, d, T, \gamma, \mathbf{r} \}$, 
where $\mathcal{S}$ is the state space, $\mathcal{F}$ is the corresponding feature space, $\mathcal{A}$ is the action space,
$d(s)$ is the distribution of an initial state,
$T(s'|s,a)$ is the transition probability from $s\in\mathcal{S}$ to $ s'\in\mathcal{S}$ by taking $a\in\mathcal{A}$,
$\gamma \in (0,1)$ is a discount factor, and $\mathbf{r}$ is the reward function.
The objective of an MDP is to find a policy which maximize $\mathbb{E}\left[\sum_{t=0}^{\infty} \gamma^{t} r(s_t,a_t) \middle| \pi, d, T \right]$,
where policy $\pi$ is a mapping from the state space to the action space.
For notational simplicity, we denote the expectation of a discounted
summation of function $f(s,a)$, 
i.e., $\mathbb{E}[\sum_{t=0}^{\infty}\gamma^{t} f(s_t,a_t) |\pi,d,T]$, 
by $\mathbb{E}_{\pi}[f(s,a)]$, where $f(s,a)$ is a function of state
and action, such as a reward function $\mathbf{r}(s,a)$ or an
indicator function $\mathbbm{1}_{\{s = s'\}}$. 
We also denote the expectation of a discounted summation of function
$f(s,a)$ conditioned on the initial state, 
i.e., $\mathbb{E}[\sum_{t=0}^{\infty}\gamma^{t} f(s_t,a_t) |\pi,s_{0}=s,T]$, 
by $\mathbb{E}_{\pi}[f(s,a)|s_{0}=s]$.
Finding an optimal policy for an MDP can be formulated as follows:
\begin{eqnarray}\label{eqn:mdp}
\begin{aligned}
& \underset{\pi}{\text{maximize}}
& & \mathbb{E}_{\pi}\left[\mathbf{r}(s_t,a_t)\right]  \\
& \text{subject to}
& &\forall \, s \;\; \sum_{a'} \pi(a'|s) = 1,\;\;  \forall \, s,a  \;\; \pi(a'|s)\ge0.
\end{aligned}
\end{eqnarray}
The necessary condition for the optimal solution of (\ref{eqn:mdp}) is
called the Bellman equation.
The Bellman equation is derived from the Bellman's optimality principal as follows:
\begin{align}
Q_{\pi}(s,a) &= \mathbf{r}(s,a) + \gamma \sum_{s'}V_{\pi}(s')T(s'|s,a) \notag \\
V_{\pi}(s) &=  \max_{a'} Q(s,a')\notag \\
\pi(s) &= \arg\max_{a'} Q(s,a'), \notag
\end{align}
where $V_{\pi}(s)$ is a value function of $\pi$, which is the expected
sum of discounted rewards when the initial state is given as $s$,
and $Q_{\pi}(s,a)$ is a state-action value function of $\pi$, which is
the expected sum of discounted rewards when the initial state and
action are given as $s$ and $a$, respectively. 
Note that the optimal solution is a deterministic function, which is referred to as a deterministic policy.


\begin{figure}[t]
\centering
\subfigure	[Reward map and action values at state $s$.]{
  \centering
  \includegraphics[width=.44\textwidth]{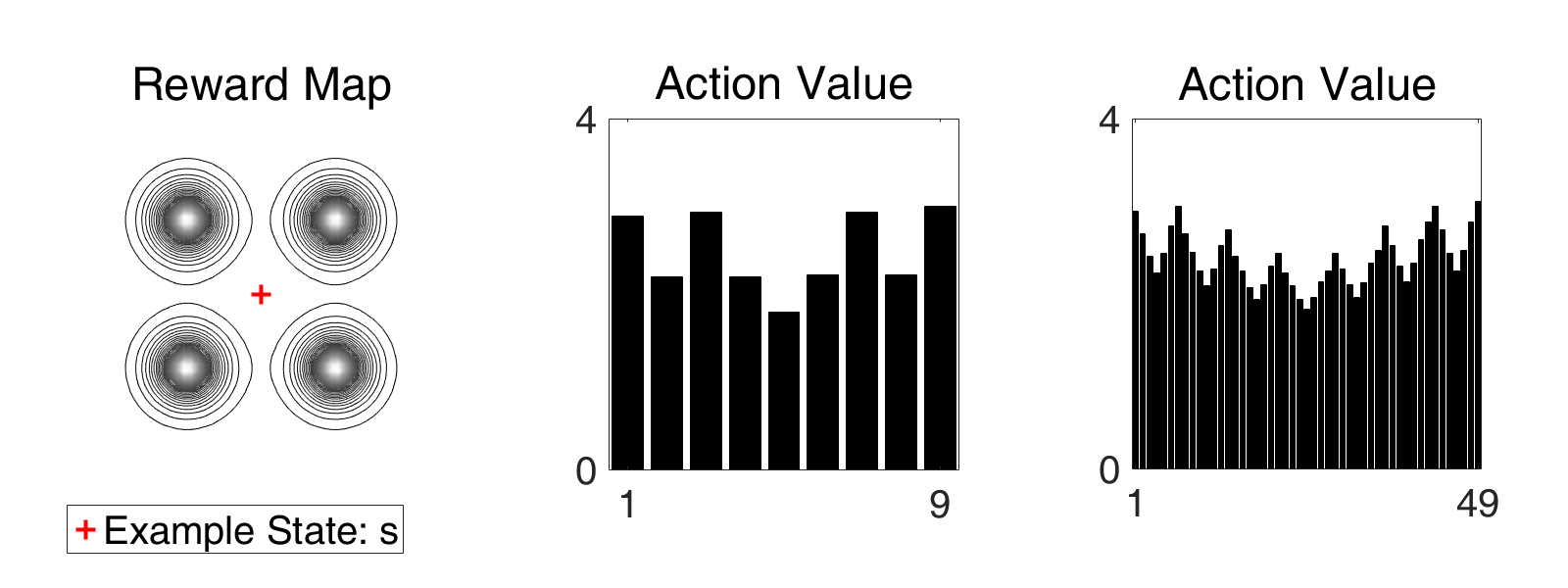}}
\subfigure[Proposed policy model and value differences (darker is better).]{
  \centering
  \includegraphics[width=.48\textwidth]{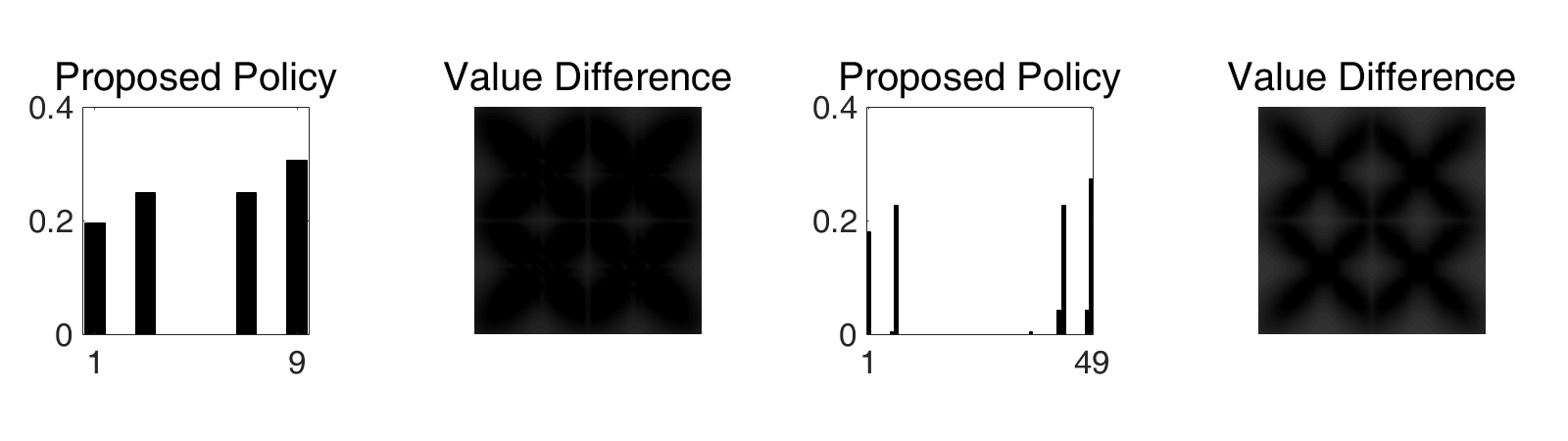}}
\subfigure[Softmax policy model and value differences (darker is better).]{
  \centering
  \includegraphics[width=.48\textwidth]{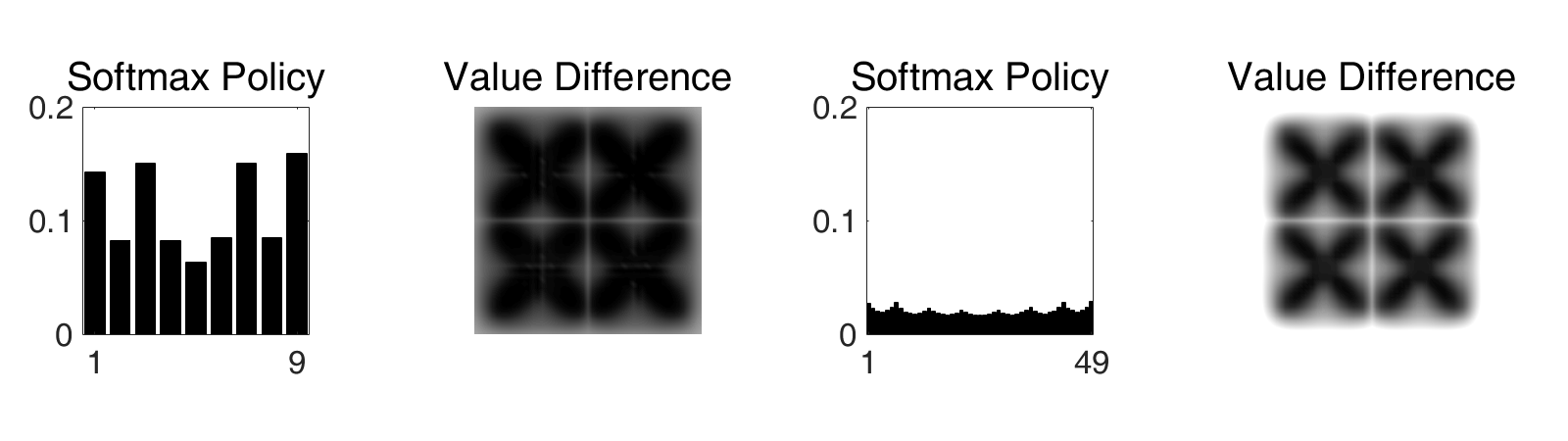}}
\caption{
A 2-dimensional multi-objective environment with point mass dynamics.
The state is a location and the action is a velocity bounded with
$[-3,3] \times [-3,3]$. 
(a) The left figure shows the reward map with four maxima (multiple
objectives). 
The action space is discretized into two levels: $9$ (low resolution)
and $49$ (high resolution). 
The middle (resp., right) figure shows the optimal action value at
state $s$ indicated as red cross point when the number of action is
$9$ (resp., $49$). 
(b) The first and third figure indicate the proposed policy
distributions at state $s$ induced by the action values in (a). 
The second and fourth figure show a map of the performance difference
between the proposed policy and the optimal policy 
at each state when the number of action is $9$ and $49$, respectively.
The larger the error, the brighter the color of the state.
(c) All figures are obtained in the same way as (b) by replacing the
proposed policy with a softmax policy. 
This example shows that the proposed policy model is less affected
when the number of actions increases.
}
\label{fig:concept}
\end{figure}

\subsection{Entropy Regularized Markov Decision Processes}

In order to obtain a multi-modal policy function, an
entropy-regularized MDP, also known as a soft MDP, has been widely
used 
\cite{tokic2011value,bloem2014infinite,schulman2017equivalence,vamplew2017softmax}.
In a soft MDP, causal entropy regularization over $\pi$ is introduced
to obtain a multi-modal policy distribution, i.e., $\pi(a|s)$.
Since causal entropy regularization penalizes a deterministic
distribution, it makes an optimal policy of a soft MDP to be a
softmax distribution. 
A soft MDP is formulated as follows:
\begin{eqnarray}\label{eqn:soft_mdp}
\begin{aligned}
& \underset{\pi}{\text{maximize}}
& & \mathbb{E}_{\pi}\left[\mathbf{r}(s_t,a_t)\right] + \alpha H(\pi)  \\
& \text{subject to}
& &\forall \, s \;\; \sum_{a'} \pi(a'|s) = 1,\;\;  \forall \, s,a  \;\; \pi(a'|s)\ge0,
\end{aligned}
\end{eqnarray}
where $H(\pi) \triangleq \mathbb{E}_{\pi}\left[-\log(\pi(a_t|s_t))\right]$ 
is a $\gamma$-discounted causal entropy and $\alpha$ is a regularization coefficient.
This problem (\ref{eqn:soft_mdp}) has been extensively studied in
\cite{Haarnoja2017,bloem2014infinite,schulman2017equivalence}. 
In \cite{bloem2014infinite}, a soft Bellman equation and the optimal
policy distribution are derived from the Karush Kuhn Tucker (KKT)
conditions as follows:
\begin{eqnarray*}
\begin{aligned}
Q_{\pi}^{soft}(s,a) &= r(s,a) + \gamma\sum_{s'} V_{\pi}^{soft}(s')T(s'|s,a)\\
V^{soft}_{\pi}(s) &= \alpha\log\left(\sum_{a'}\exp\left(\frac{Q_{\pi}^{soft}(s,a')}{\alpha}\right)\right)\\
\pi(a|s) &= \frac{\exp\left(\frac{Q_{\pi}^{soft}(s,a)}
{\alpha}\right)}{\sum_{a'}\exp\left(\frac{Q_{\pi}^{soft}(s,a')}{\alpha}\right)},
\end{aligned}
\end{eqnarray*}
where 
\begin{eqnarray*}
\begin{aligned}
V^{soft}_{\pi}(s) &= \mathbb{E}_{\pi}\left[r(s_t,a_t) - \alpha\log(\pi(a_t|s_t)) \middle| s_0 = s \right]\\
Q^{soft}_{\pi}(s,a) &= \mathbb{E}_{\pi}\left[r(s_t,a_t) - \alpha\log(\pi(a_t|s_t)) \middle| s_0 = s, a_0 = a \right].
\end{aligned}
\end{eqnarray*}
$V_{\pi}^{soft}(s)$ is a soft value of $\pi$ indicating the
expected sum of rewards including the entropy of a policy, obtained by
starting at state $s$ and $Q_{\pi}^{soft}(s,a)$ is a soft
state-action value of $\pi$, which is the expected sum of rewards
obtained by starting at state $s$ by taking action $a$. 
Note that the optimal policy distribution is a softmax distribution.
In \cite{bloem2014infinite}, a soft value iteration method is also
proposed and the optimality of soft value iteration is proved. 
By using causal entropy regularization, the optimal policy
distribution of a soft MDP is able to represent a multi-modal
distribution. 

The causal entropy regularization has an effect of making the
resulting policy of a soft MDP closer to a uniform distribution as the
number of actions increases. 
To handle this issue, we propose a novel regularization method whose
resulting policy distribution still has multiple modes (a stochastic
policy) but the performance loss is less than a softmax policy
distribution.  

\section{Sparse Markov Decision Processes}

We propose a sparse Markov decision process by introducing a novel
causal sparse Tsallis entropy regularizer:
\begin{eqnarray*}
\begin{aligned}
	W(\pi) &\triangleq 
		\mathbb{E}\left[
			\sum_{t=0}^{\infty} \gamma^{t}\frac{1}{2}(1-\pi(a_t|s_t))\middle|\pi,d,T\right
			]\\
	&=\mathbb{E}_{\pi}\left[\frac{1}{2}(1-\pi(a|s))\right].
\end{aligned}
\end{eqnarray*}
By adding $W(\pi)$ to the objective function of (\ref{eqn:mdp}), we
aim to solve the following optimization problem: 
\begin{eqnarray}
\begin{aligned}\label{eqn:sps_mdp}
	& \underset{\pi}{\text{maximize}}
	& & \mathbb{E}_{\pi}\left[\mathbf{r}(s,a)\right] + \alpha W(\pi)  \\
	& \text{subject to}
	& &\forall \, s \;\; \sum_{a'} \pi(a'|s) = 1,\;\; \forall \, s,a  \;\; \pi(a'|s)\ge0,
\end{aligned}
\end{eqnarray}
where $\alpha > 0$ is a regularization coefficient.
We will first derive the sparse Bellman equation from the necessary condition 
of (\ref{eqn:sps_mdp}). 
Then by observing the connection between the sparse Bellman equation
and the probability simplex projection, we show that the optimal
policy becomes a \textit{sparsemax} distribution, where the sparsity
can be controlled by $\alpha$.   
In addition, we present a sparse value iteration algorithm where the
optimality is guaranteed using the Banach's fixed point theorem.
The detailed derivations of lemmas and theorems in this paper can be found
in Appendix \ref{sec:analysis}. 

\subsection{Notations and Properties}

We first introduce notations and properties used in the paper.
In Table \ref{tbl:notations}, all notations and definitions are summarized.
The utility, value, state visitation can be compactly expressed as
below in terms of vectors and matrices:
\begin{eqnarray*}
\begin{aligned}
J^{sp}_{\pi} &= d^{\intercal}G_{\pi}^{-1}r^{sp}_{\pi},& \;\;\;V^{sp}_{\pi} &= G_{\pi}^{-1}r^{sp}_{\pi}\\
J^{soft}_{\pi} &= d^{\intercal}G_{\pi}^{-1}r^{soft}_{\pi},& \;\;\;V^{soft}_{\pi} &= G_{\pi}^{-1}r^{soft}_{\pi}, \;\; \rho_{\pi} = d^{\intercal}G_{\pi}^{-1}
\end{aligned}
\end{eqnarray*}
where $x^{\intercal}$ is the transpose of vector $x$, 
$G_{\pi} = (I - \gamma T_{\pi})$,
$sp$ indicates a sparse MDP problem
and $soft$ indicates a soft MDP problem.

\begin{table*}[h] 
\centering
\small
\begin{tabular}{|p{.1\textwidth}|p{.4\textwidth}|p{.4\textwidth}|}
\hline
Terms  & sparse MDP & soft MDP \\ \hline
Utility                 &$J^{sp}_{\pi}\triangleq\mathbb{E}_{\pi}\left[\mathbf{r}(s',a') + \frac{\alpha}{2}(1-\pi(a'|s')) \right]$\newline$=\sum_{s}d(s)V_{\pi}^{sp}(s)=\sum_{s}\mathbf{r}_{\pi}^{sp}(s)\rho_{\pi}(s)$ &$J^{soft}_{\pi} \triangleq\mathbb{E}_{\pi}\left[\mathbf{r}(s',a') - \alpha\log(\pi(a'|s')) \right]$\newline$=\sum_{s}d(s)V_{\pi}^{soft}(s)=\sum_{s}\mathbf{r}_{\pi}^{soft}(s)\rho_{\pi}(s)$ \\ \hline
Value                   &$V^{sp}_{\pi}(s)$\newline $\triangleq\mathbb{E}_{\pi}\left[\mathbf{r}(s',a') + \frac{\alpha}{2}(1-\pi(a'|s'))\middle| s_0 = s \right]$\newline$=\mathbf{r}_{\pi}^{sp}(s) + \gamma \sum_{s'}V_{\pi}^{sp}(s')T_{\pi}(s'|s)$&$V^{soft}_{\pi}(s)$\newline$\triangleq\mathbb{E}_{\pi}\left[\mathbf{r}(s',a') - \alpha\log(\pi(a'|s'))\middle| s_0 = s \right]$\newline$=\mathbf{r}_{\pi}^{soft}(s) + \gamma \sum_{s'}V_{\pi}^{soft}(s')T_{\pi}(s'|s)$\\ \hline
Action value            &$Q^{sp}_{\pi}(s,a) \triangleq$\newline$\mathbf{r}(s,a) + \gamma \sum_{s'}V^{sp}_{\pi}(s')T(s'|s,a)$& $Q^{soft}_{\pi}(s,a) \triangleq$\newline$\mathbf{r}(s,a) + \gamma \sum_{s'}V^{soft}_{\pi}(s')T(s'|s,a)$\\ \hline
 Expected State Reward   &$\mathbf{r}^{sp}_{\pi}(s)\triangleq$\newline $\sum_{a'} \left(\mathbf{r}(s,a') + \frac{\alpha}{2}(1-\pi(a'|s))\right)\pi(a'|s)$&$\mathbf{r}^{soft}_{\pi}(s)\triangleq$\newline $\sum_{a'} \left(\mathbf{r}(s,a') - \alpha\log(\pi(a'|s))\right)\pi(a'|s)$ \\ \hline
Policy Regularization & $W(\pi) \triangleq \mathbb{E}_{\pi}\left[
			\frac{1}{2}(1-\pi(a|s))\right]$\newline
			$=\sum_{s,a}\frac{1}{2}(1-\pi(a|s))\pi(a|s)\rho(s)$  & $H(\pi) = \mathbb{E}_{\pi}[-\pi(a|s)\log(\pi(a|s))]$\newline
$=\sum_{s,a}-\pi(a|s)\log(\pi(a|s))\rho(s)$ \\ \hline
Max Approximation & $\text{spmax}(z) \triangleq \frac{1}{2}\sum_{i = 1}^{K}\left(z_{(i)}^2 - \tau(z)^2\right) + \frac{1}{2}$& $\text{logsumexp}\left(z\right)\triangleq \log \sum_{i} \exp(z_i)$ \\ \hline
Value Iteration Operator & $ U^{sp}(x)(s) = \alpha\text{spmax}\left(\frac{r(s,\cdot) + \gamma\sum_{s'} x(s')T(s'|s,\cdot)}{\alpha}\right)$& $ U^{soft}(x)(s) = \alpha\text{logsumexp}\left(\frac{r(s,\cdot) + \gamma\sum_{s'} x(s')T(s'|s,\cdot)}{\alpha}\right)$ \\ \hline
 State Visitation        & \multicolumn{2}{c|}{$\rho_{\pi}(s) \triangleq \mathbb{E}_{\pi}\left[\mathbbm{1}_{\{s' = s\}} \right] = d(s) + \gamma\sum_{s',a'}T(s|s',a')\rho_{\pi}(s',a')$}  \\ \hline
 State Action Visitation & \multicolumn{2}{c|}{$\rho_{\pi}(s,a) \triangleq\mathbb{E}_{\pi}\left[\mathbbm{1}_{\{s' = s, a' = a\}} \right] = \pi(a|s)d(s) + \gamma\sum_{s',a'}\pi(a|s)T(s|s',a')\rho_{\pi}(s',a')$} \\ \hline
 Transition Probability given $\pi$ & \multicolumn{2}{c|}{$T_{\pi}(s'|s) \triangleq \sum_{a} T(s'|s,a)\pi(a|s)$} \\ \hline
\end{tabular}
\caption{Notations and Properties}
\label{tbl:notations}
\end{table*}

\subsection{Sparse Bellman Equation from Karush-Kuhn-Tucker conditions}

The sparse Bellman equation can be derived from the necessary conditions
of an optimal solution of a sparse MDP.
We carefully investigate the Karush Kuhn Tucker (KKT) conditions, which indicate necessary conditions for a solution to be optimal
when some regularity conditions about the feasible set are satisfied.
The feasible set of a sparse MDP satisfies linearity constraint
qualification \cite{ye2000constraint} since the feasible set consists
of linear afine functions. 
In this regards, the optimal solution of a sparse MDP necessarily satisfy KKT conditions as follows.

\begin{theorem}\label{thm:sp_bellman_eqn}
If a policy distribution $\pi$ is the optimal solution of a sparse MDP
(\ref{eqn:sps_mdp}), then $\pi$ and the corresponding sparse value
function $V_{\pi}^{sp}$ necessarily satisfy following equations for
all state and action pairs: 
{\footnotesize
\begin{align}
Q_{\pi}^{sp}(s,a) &= \mathbf{r}(s,a) + \gamma \sum_{s'}V_{\pi}^{sp}(s')T(s'|s,a) \notag \\
V_{\pi}^{sp}(s) &= \alpha\left[\frac{1}{2}\sum_{a\in S(s)}\left(\left(\frac{Q_{\pi}^{sp}(s,a)}{\alpha}\right)^{2} - \tau \left(\frac{Q_{\pi}^{sp}(s,\cdot)}{\alpha}\right)^{2}\right) + \frac{1}{2}\right] \notag \\
\pi(a|s) &= \max\left(\frac{Q_{\pi}^{sp}(s,a)}{\alpha} - \tau
\left(\frac{Q_{\pi}^{sp}(s,\cdot)}{\alpha}\right),0\right),
\label{eqn:sps_bellman}
\end{align}
}
where $\tau\left(\frac{Q_{\pi}^{sp}(s,\cdot)}{\alpha}\right) = \frac{\sum_{a\in S(s)}\frac{Q_{\pi}^{sp}(s,a)}{\alpha} - 1}{K_s}$, $S(s)$ is a set of actions 
satisfying $1 + i\frac{Q_{\pi}^{sp}(s,a_{(i)})}{\alpha}>\sum_{j=0}^{i}\frac{Q_{\pi}^{sp}(s,a_{(j)})}{\alpha}$
with $a_{(i)}$  indicating the action with the $i$th largest action value $Q_{\pi}^{sp}(s,a_{(i)})$,
and $K_s$ is the cardinality of $S(s)$.
\end{theorem}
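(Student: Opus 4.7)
The plan is to apply the KKT conditions to the constrained problem (\ref{eqn:sps_mdp}), compute the gradient $\partial J^{sp}_\pi / \partial \pi(a|s)$ in closed form, and then read off both the sparsemax form of $\pi(a|s)$ and the matching expression for $V^{sp}_\pi(s)$ from stationarity and complementary slackness. Since all constraints are affine, KKT is necessary at any optimum (as noted in the excerpt), so I form the Lagrangian
\[
L(\pi,\mu,\lambda) \;=\; J^{sp}_\pi \;+\; \sum_s \mu_s\Bigl(1 - \sum_{a} \pi(a|s)\Bigr) \;+\; \sum_{s,a} \lambda_{s,a}\,\pi(a|s), \qquad \lambda_{s,a} \ge 0.
\]

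The crux of the proof is the gradient. Using the vector identity $J^{sp}_\pi = d^{\intercal} G_\pi^{-1} r^{sp}_\pi$ from the notation table, together with $\partial G_\pi^{-1}/\partial \pi(a|s) = \gamma\,G_\pi^{-1}(\partial T_\pi/\partial \pi(a|s))\,G_\pi^{-1}$ and $d^{\intercal} G_\pi^{-1} = \rho_\pi^{\intercal}$, and noting that $\partial T_\pi/\partial \pi(a|s)$ is supported only on row $s$ with entries $T(\cdot|s,a)$, a direct computation gives
\[
\frac{\partial J^{sp}_\pi}{\partial \pi(a|s)} \;=\; \rho_\pi(s)\,\Bigl[Q^{sp}_\pi(s,a) + \frac{\alpha}{2} - \alpha\,\pi(a|s)\Bigr].
\]
It is important to keep $r^{sp}_\pi(s) = \sum_{a'} \pi(a'|s)[r(s,a') + \frac{\alpha}{2}(1 - \pi(a'|s))]$ in its original quadratic form and not to pre-substitute $\sum_a \pi(a|s) = 1$ before differentiating; otherwise the $\alpha/2$ term in the gradient would silently drop out.

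Restricting to reachable states with $\rho_\pi(s) > 0$, dividing stationarity by $\rho_\pi(s)$ and rescaling the multipliers yields $Q^{sp}_\pi(s,a)/\alpha + 1/2 - \pi(a|s) = \bar\mu_s/\alpha - \bar\lambda_{s,a}/\alpha$. Let $S(s) = \{a : \pi(a|s) > 0\}$. Complementary slackness kills $\bar\lambda_{s,a}$ on $S(s)$, so $\pi(a|s) = Q^{sp}_\pi(s,a)/\alpha + 1/2 - \bar\mu_s/\alpha$ there; summing over $S(s)$ and using $\sum_{a \in S(s)} \pi(a|s) = 1$ pins down $\bar\mu_s/\alpha - 1/2 = \tau(Q^{sp}_\pi(s,\cdot)/\alpha)$, which gives $\pi(a|s) = \max(Q^{sp}_\pi(s,a)/\alpha - \tau,\, 0)$. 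Dual feasibility $\bar\lambda_{s,a} \ge 0$ on the complement of $S(s)$ forces $Q^{sp}_\pi(s,a)/\alpha \le \tau$ there, and combining with the strict inequality on $S(s)$ reproduces, after sorting action values, the ranked support rule $1 + i\,Q^{sp}_\pi(s,a_{(i)})/\alpha > \sum_{j \le i} Q^{sp}_\pi(s,a_{(j)})/\alpha$ stated in the theorem. This is exactly the classical support characterization of the Euclidean projection onto the probability simplex, which explains the advertised connection to the simplex projection problem.

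The value formula is then a short calculation. Substituting $\pi(a|s) = Q^{sp}_\pi(s,a)/\alpha - \tau$ for $a \in S(s)$ into $V^{sp}_\pi(s) = \sum_a \pi(a|s)Q^{sp}_\pi(s,a) + \frac{\alpha}{2}(1 - \sum_a \pi(a|s)^2)$, expanding $(Q^{sp}_\pi(s,a)/\alpha - \tau)^2$, and using $\sum_{a \in S(s)} \pi(a|s) = 1$ to cancel the $2\tau\sum_{a\in S(s)} Q^{sp}_\pi(s,a)/\alpha$ cross term against the linear piece, everything collapses to $\alpha\bigl[\frac{1}{2}\sum_{a \in S(s)}((Q^{sp}_\pi(s,a)/\alpha)^2 - \tau^2) + \frac{1}{2}\bigr]$. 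The step I expect to be the main obstacle is the gradient computation itself: because the per-step reward $\frac{\alpha}{2}(1-\pi(a|s))$ is policy-dependent, one cannot invoke the standard policy gradient theorem as a black box, and one must do the matrix calculus carefully to land on the precise combination $Q^{sp}_\pi + \alpha/2 - \alpha\pi$. Once that is in hand, the remaining work is clean algebra plus the well-known sparsemax support characterization.
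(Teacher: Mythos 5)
Your proposal is correct and follows essentially the same route as the paper's proof: form the Lagrangian, compute $\partial J^{sp}_\pi/\partial\pi(a|s)=\rho_\pi(s)\bigl(Q^{sp}_\pi(s,a)+\tfrac{\alpha}{2}-\alpha\pi(a|s)\bigr)$ via the chain rule on $d^{\intercal}G_\pi^{-1}r^{sp}_\pi$, use complementary slackness and the normalization constraint to pin down the multiplier and obtain the sparsemax form, then substitute back for the value formula. Your explicit use of dual feasibility to exclude actions outside the support and your caveat about restricting to states with $\rho_\pi(s)>0$ are small refinements the paper leaves implicit, but the argument is the same.
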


The full proof of Theorem \ref{thm:sp_bellman_eqn} is provided in Appendix \ref{subsec:sp_bellman_eq}.
The proof depends on the KKT condition where the derivative of a
Lagrangian objective function with respect to policy $\pi(a|s)$
becomes zero at the optimal solution, the stationary condition. 
From (\ref{eqn:sps_bellman}), it can be shown that the optimal
solution obtained from the sparse MDP assigns zero probability 
to the action whose action value $Q^{sp}(s,a)$ is below the threshold
$\tau\left(\frac{Q_{\pi}^{sp}(s,\cdot)}{\alpha}\right)$ 
and the optimal policy assigns positive probability to near optimal
actions in proportion to their action values, where the threshold
$\tau\left(\frac{Q_{\pi}^{sp}(s,\cdot)}{\alpha}\right)$ determines the
range of near optimal actions.
This property makes the optimal policy to have a sparse distribution and
prevents the performance drop caused by assigning non-negligible
positive probabilities to non-optimal actions, which often occurs in a soft MDP.

From the definitions of $S(s)$ and $\pi(a|s)$, we can further observe 
an interesting connection between the sparse Bellman equation and the
probability simplex projection problem \cite{wang2013projection}. 

\subsection{Probability Simplex Projection and SparseMax Operation}
The probability simplex projection \cite{wang2013projection} is a well
known problem of projecting a $d$-dimensional vector into a $d-1$
dimensional probability simplex in an Euclidean metric sense.  
A probability simplex projection problem is defined as follows:
\begin{eqnarray}
\small
\begin{aligned}\label{eqn:simplex_prob}
& \underset{p}{\text{minimize}}
& & \frac{1}{2}||p - z||^2_2 \\
& \text{subject to}
& &\;\; \sum_{i=1}^{d} p_i = 1, \;\;\; p_i\ge0,\; \forall i = 1,\cdots,d,
\end{aligned}
\end{eqnarray}
where $z$ is a given $d$-dimensional vector, $d$ is the dimension of
$p$ and $z$, and $p_{i}$ is the $i$th element of $p$.
Let $z_{(i)}$ be the $i$th largest element of $z$ and $\text{supp}(z)$
be the supporting set of the optimal solution as defined by 
$\text{supp}(z) = \{z_{(i)}| 1+iz_{(i)} > \sum_{j=1}^{i}z_{(j)}\}$. 
It is a well known fact that the problem (\ref{eqn:simplex_prob}) has
a closed form solution which is $p_{i}^{*}(z) = \max(z_{i} - \tau(z), 0)$, 
where $i$ indicates the $i$th dimension,
$p_{i}^{*}(z)$ is the $i$th element of the optimal solution for fixed $z$,
and $\tau(z) = \frac{\sum_{i = 1}^{K}z_{(i)} - 1}{K}$
with $K = |\text{supp}(z)|$ \cite{wang2013projection,martins2016softmax}.

Interestingly, the optimal solution $p^{*}(\cdot)$, $\tau(\cdot)$ and
the supporting set $\text{supp}(\cdot)$ of (\ref{eqn:simplex_prob})  
can be precisely matched to those of the sparse Bellman equation (\ref{eqn:sps_bellman}).
From this observation, it can be shown that the optimal policy
distribution of a sparse MDP is the projection of
$Q_{\pi}^{sp}(s,\cdot)$ into a probability simplex.
Note that we refer $p^{*}(\cdot)$ as a \textit{sparsemax} distribution.

More surprisingly, $V_{\pi}^{sp}$ can be represented as an
approximation of the \textit{max} operation derived from $p^{*}(z)$. 
A differentiable approximation of the \textit{max} operation is defined as follows:
\begin{equation}\label{eqn:spmax_dfn}
\text{spmax}(z) \triangleq \frac{1}{2}\sum_{i = 1}^{K}\left(z_{(i)}^2 - \tau(z)^2\right) + \frac{1}{2}
\end{equation}
We call $\text{spmax}(z)$ as \textit{sparsemax}.
In \cite{martins2016softmax}, it is proven that $\text{spmax}(z)$ is
an indefinite integral of $p^{*}(z)$, 
i.e., $\text{spmax}(z) = \int \left(p^{*}(z)\right)^{\intercal}\bold{d}z + C$,
where $C$ is a constant and, in our case, $C=\frac{1}{2}$.
We provide simple upper and lower bounds of $\text{spmax}(z)$ with respect to $\max(z)$
\begin{equation}\label{eqn:spmax_bnd}
\max(z) \le \alpha\text{spmax}\left(\frac{z}{\alpha}\right) \le \max(z) + \alpha\frac{d-1}{2d}.
\end{equation}
The lower bound of \textit{sparsemax} is shown in \cite{martins2016softmax}.
However, we provide another proof of the lower bound and the proof for
the upper bound in Appendix \ref{subsec:upper_lower_bnd_spmax}.

The bounds (\ref{eqn:spmax_bnd}) show that \textit{sparsemax} is a
bounded and smooth approximation of \textit{max} and, from this fact,
(\ref{eqn:sps_bellman}) can be interpreted as an approximation of the
original Bellman equation. 
Using this notation, $V^{sp}_{\pi}$ can be rewritten as,
\begin{equation*}
V_{\pi}^{sp}(s) = \alpha\text{spmax}\left(\frac{Q_{\pi}^{sp}(s,\cdot)}{\alpha}\right).
\end{equation*}


\subsection{Supporting Set of Sparse Optimal Policy}
The supporting set $S(s)$ of a sparse MDP is a set of actions with
nonzero probabilities and the cardinality of $S(s)$ can be controlled
by regularization coefficient $\alpha$, while the supporting set of
a soft MDP is always the same as the entire action space. 
In a sparse MDP, actions assigned with non-zero probability must
satisfy the following inequality:
\begin{equation}\label{eqn:supp_cnd}
\alpha + iQ_{\pi}^{sp}(s,a_{(i)})>\sum_{j=1}^{i}Q_{\pi}^{sp}(s,a_{(j)}),
\end{equation}
where $a_{(i)}$ indicates the action with the $i$th largest action value.
From this inequality, it can be shown that 
$\alpha$ controls the margin between the largest action value and the
others included in the supporting set.
In other words, as $\alpha$ increases, the cardinality of a supporting
set increases since the action values that satisfy (\ref{eqn:supp_cnd}) increase. 
Conversely, as $\alpha$ decreases, the supporting set decreases.
In extreme cases, if $\alpha$ goes zero, only $a_{(1)}$ will be
included in $S(s)$ and if $\alpha$ goes infinity, the entire actions
will be included in $S(s)$. 
On the other hand, in a soft MDP, the supporting set of a softmax
distribution cannot be controlled by the regularization coefficient
$\alpha$ even if the sharpness of the softmax distribution can be adjusted.
This property makes sparse MDPs have an advantage over soft MDPs,
since we can give a zero probability to non-optimal actions by
controlling $\alpha$. 

\subsection{Connection to Tsallis Generalized Entropy}

The notion of the Tsallis entropy was introduced by C. Tsallis as a
general extension of entropy \cite{Tsallis1988possible} and the
Tsallis entropy has been widely used to describe thermodynamic 
systems and molecular motions. 
Surprisingly, the proposed regularization is closely related to a
special case of the Tsallis entropy. 
The Tsallis entropy is defined as follows: 
\begin{equation*}
S_{q,k}(p) = \frac{k}{q-1}\left(1 - \sum_{i}p_{i}^{q}\right),
\end{equation*}
where $p$ is a probability mass function, 
$q$ is a parameter called \textit{entropic-index}, 
and $k$ is a positive real constant.
Note that, if $q\rightarrow 1$ and $k=1$, $S_{1,1}(p)$ is the same as entropy, i.e., $-\sum_{i}p_{i}\log(p_{i})$.
In \cite{ziebart2010MPAs, bloem2014infinite}, it is shown that
$H(\pi)$ is an extension of $S_{1,1}(\pi(\cdot|s))$ since $H(\pi) = \mathbb{E}_{\pi}[S_{(1,1)}(\pi(\cdot|s))]
=\sum_{s,a}-\pi(a|s)\log(\pi(a|s))\rho(s)$. 

We discover the connection between the Tsallis entropy and the
proposed regularization when $q = 2$ and $k = \frac{1}{2}$. 
\begin{theorem}\label{thm:causal_tsallis_entropy}
The proposed policy regularization $W(\pi)$ is an extension of the
Tsallis entropy with parameters $q = 2$ and $k = \frac{1}{2}$ 
to the version of causal entropy, i.e.,
\begin{equation*}
W(\pi) = \mathbb{E}_{\pi}[S_{2,\frac{1}{2}}(\pi(\cdot|s))].
\end{equation*}
\end{theorem}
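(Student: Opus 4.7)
The plan is to prove the identity by unfolding both sides against the notational convention $\mathbb{E}_{\pi}[f(s,a)] = \mathbb{E}\bigl[\sum_{t=0}^{\infty}\gamma^{t} f(s_t,a_t)\,\bigm|\,\pi,d,T\bigr]$ established in the Background section, and then performing one inner expectation over the action drawn from $\pi(\cdot|s_t)$. The only substantive algebra is verifying that, for a fixed state $s$, the action-averaged quantity $\sum_{a}\pi(a|s)\cdot\tfrac{1}{2}(1-\pi(a|s))$ coincides with $S_{2,1/2}(\pi(\cdot|s))$; everything else is linearity of expectation.

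First I would plug the parameters $q=2$, $k=\tfrac{1}{2}$ into the definition of the Tsallis entropy to get
\begin{equation*}
S_{2,\frac{1}{2}}(\pi(\cdot|s))
= \frac{1/2}{2-1}\Bigl(1 - \sum_{a}\pi(a|s)^{2}\Bigr)
= \frac{1}{2}\Bigl(1 - \sum_{a}\pi(a|s)^{2}\Bigr).
\end{equation*}
Next, I would fix a state $s$ and compute the conditional expectation of the summand appearing in $W(\pi)$ with respect to $a\sim\pi(\cdot|s)$:
\begin{equation*}
\sum_{a}\pi(a|s)\cdot\tfrac{1}{2}\bigl(1-\pi(a|s)\bigr)
= \tfrac{1}{2}\sum_{a}\pi(a|s) - \tfrac{1}{2}\sum_{a}\pi(a|s)^{2}
= \tfrac{1}{2}\Bigl(1 - \sum_{a}\pi(a|s)^{2}\Bigr),
\end{equation*}
where the first sum equals $1$ by the simplex constraint in (\ref{eqn:sps_mdp}). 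This identifies the per-state integrand with $S_{2,1/2}(\pi(\cdot|s))$.

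Finally, I would invoke the tower property of conditional expectation within the $\gamma$-discounted sum: using $\mathbb{E}_{\pi}[f(s,a)]=\mathbb{E}\bigl[\sum_{t}\gamma^{t}\mathbb{E}_{a_{t}\sim\pi(\cdot|s_{t})}[f(s_{t},a_{t})\mid s_{t}]\bigr]$ with $f(s,a)=\tfrac{1}{2}(1-\pi(a|s))$, the inner expectation becomes $S_{2,1/2}(\pi(\cdot|s_{t}))$ by the previous step, so
\begin{equation*}
W(\pi)
= \mathbb{E}\Bigl[\sum_{t=0}^{\infty}\gamma^{t}\,S_{2,\frac{1}{2}}(\pi(\cdot|s_{t}))\,\Bigm|\,\pi,d,T\Bigr]
= \mathbb{E}_{\pi}\bigl[S_{2,\frac{1}{2}}(\pi(\cdot|s))\bigr],
\end{equation*}
which is the claimed identity. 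There is no real obstacle here; the only mild subtlety is keeping straight the two meanings of $\mathbb{E}_{\pi}[\cdot]$—the $\gamma$-discounted trajectory expectation used to define $W(\pi)$, and the single-step conditional expectation that collapses $\tfrac{1}{2}(1-\pi(a|s))$ into $S_{2,1/2}(\pi(\cdot|s))$—so I would state the notational conventions explicitly before the computation to avoid ambiguity.
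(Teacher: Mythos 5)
Your proposal is correct and takes essentially the same route as the paper: both reduce the claim to the per-state identity $\sum_{a}\pi(a|s)\cdot\tfrac{1}{2}\bigl(1-\pi(a|s)\bigr)=\tfrac{1}{2}\bigl(1-\sum_{a}\pi(a|s)^{2}\bigr)=S_{2,\frac{1}{2}}(\pi(\cdot|s))$ and then push it through the $\gamma$-discounted expectation. The paper carries out the outer step by rewriting $W(\pi)$ in terms of the state-action visitation $\rho_{\pi}(s,a)=\rho_{\pi}(s)\pi(a|s)$ rather than your tower-property phrasing, but this is the same computation.
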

The proof is provided in Appendix \ref{subsec:csp_ent}

From this theorem, $W(\pi)$ can be interpreted as an extension of
$S_{2,\frac{1}{2}}(p)$ to the case of causally conditioned
distribution, similarly to the causal entropy. 

\section{Sparse Value Iteration}
In this section, we propose an algorithm for solving a causal sparse
Tsallis entropy regularized MDP problem.
Similar to the original MDP and a soft MDP, the sparse version of
value iteration can be induced from the sparse Bellman equation. 
We first define a sparse Bellman operation 
$U^{sp} : \mathbb{R}^{|\mathcal{S}|} \to \mathbb{R}^{|\mathcal{S}|}$: 
for all $s$,
\begin{eqnarray*}
U^{sp}(x)(s) = \alpha\text{spmax}\left(\frac{r(s,\cdot) + \gamma\sum_{s'} x(s')T(s'|s,\cdot)}{\alpha}\right),
\end{eqnarray*}
where $x$ is a vector in $\mathbb{R}^{|\mathcal{S}|}$ and 
$U^{sp}(x)$ is the resulting vector after applying $U^{sp}$ to $x$ and 
$U^{sp}(x)(s)$ is the element for state $s$ in $U^{sp}(x)$.
Then, the sparse value iteration algorithm can be described simply as
\begin{equation*}
x_{i+1} = U^{sp}(x_i),
\end{equation*}
where $i$ is the number of iterations.
In the following section, we show the convergence and the optimality
of the proposed sparse value iteration method.
  
\subsection{Optimality of Sparse Value Iteration}
In this section, we prove the convergence and optimality of the sparse
value iteration method. 
We first show that $U^{sp}$ has monotonic and discounting properties
and, by using those properties, we prove that $U^{sp}$ is a contraction.
Then, by the Banach fixed point theorem, repeatedly applying $U^{sp}$
for an arbitrary initial point always converges into the unique fixed
point. 
\begin{lemma}\label{lem:mon}
$U^{sp}$ is monotone: for $x,y \in \mathbb{R}^{|\mathcal{S}|}$, 
if $x \leq y$, then $U^{sp}(x) \leq U^{sp}(y)$, where $\leq$ indicates
an element-wise inequality. 
\end{lemma}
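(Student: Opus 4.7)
The plan is to factor $U^{sp}$ as the composition of two operations and argue that each preserves the element-wise order. Writing
\begin{equation*}
z^{(x)}(s,a) \;\triangleq\; \frac{r(s,a) + \gamma \sum_{s'} x(s') T(s'|s,a)}{\alpha},
\end{equation*}
we have $U^{sp}(x)(s) = \alpha\,\text{spmax}\bigl(z^{(x)}(s,\cdot)\bigr)$. First I would observe that the map $x \mapsto z^{(x)}$ is monotone: since $\gamma > 0$, $\alpha > 0$, and $T(s'|s,a) \ge 0$ for every triple, $x \le y$ componentwise immediately gives $z^{(x)}(s,a) \le z^{(y)}(s,a)$ for every $(s,a)$.

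The remaining task is therefore to show that $\text{spmax}:\mathbb{R}^d \to \mathbb{R}$ is monotone, i.e., $z \le z'$ componentwise implies $\text{spmax}(z) \le \text{spmax}(z')$. The cleanest route is the variational representation that comes out of the KKT derivation in Theorem~\ref{thm:sp_bellman_eqn} applied per-state: spmax is the optimal value of
\begin{equation*}
\text{spmax}(z) \;=\; \max_{p \in \Delta^{d-1}} \;\langle p,z\rangle + \tfrac{1}{2}\bigl(1 - \|p\|_2^2\bigr),
\end{equation*}
where $\Delta^{d-1}$ is the probability simplex. Since $p \ge 0$ on the simplex and the penalty $\tfrac{1}{2}(1 - \|p\|_2^2)$ does not depend on $z$, for every fixed feasible $p$ the objective is coordinate-wise nondecreasing in $z$. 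Taking a pointwise maximum over $p$ preserves this order, so $\text{spmax}(z) \le \text{spmax}(z')$. (Equivalently, one may invoke $\nabla\,\text{spmax}(z) = p^*(z) \ge 0$, which was noted just after equation~\eqref{eqn:spmax_dfn}.)

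Combining the two monotone steps, $x \le y$ implies $z^{(x)} \le z^{(y)}$ componentwise, which implies $\text{spmax}(z^{(x)}(s,\cdot)) \le \text{spmax}(z^{(y)}(s,\cdot))$ for every $s$, and finally $U^{sp}(x) \le U^{sp}(y)$ after multiplying through by $\alpha > 0$. There is no real obstacle here; the only subtle point is arguing monotonicity of spmax without re-deriving its closed form, and the variational/gradient argument above handles that in one line.
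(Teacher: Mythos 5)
Your proof is correct and follows the same decomposition as the paper's: both reduce the claim to (i) order-preservation of the affine map $x \mapsto \bigl(r(s,\cdot)+\gamma\sum_{s'}x(s')T(s'|s,\cdot)\bigr)/\alpha$, which holds because $T\ge 0$ and $\gamma,\alpha>0$, and (ii) monotonicity of $\text{spmax}$. The one genuine difference is how step (ii) is handled: the paper simply cites \cite{martins2016softmax} for the monotonicity of the sparsemax operation, whereas you prove it from the variational identity $\text{spmax}(z)=\max_{p\in\Delta^{d-1}}\langle p,z\rangle+\tfrac{1}{2}\bigl(1-\|p\|_2^2\bigr)$. That identity is easily verified --- completing the square shows the maximizer is exactly the simplex projection $p^{*}(z)$ and the optimal value reduces to (\ref{eqn:spmax_dfn}); it is also the Tsallis-entropy-regularized form consistent with Theorem \ref{thm:causal_tsallis_entropy}. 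Since for each fixed $p\ge 0$ the objective is coordinatewise nondecreasing in $z$ and a pointwise maximum of nondecreasing functions is nondecreasing, your argument makes the lemma self-contained at essentially no extra cost; the gradient observation $\nabla\,\text{spmax}(z)=p^{*}(z)\ge 0$ is an equivalent justification. No gaps.
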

\begin{lemma}\label{lem:dis}
For any constant $c\in\mathbb{R}$, $U^{sp}(x+c\mathbf{1}) = U^{sp}(x)+\gamma c\mathbf{1}$, where $\mathbf{1} \in \mathbb{R}^{|\mathcal{S}|}$ 
is a vector of all ones.
\end{lemma}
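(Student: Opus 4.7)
The plan is to unpack the definition of $U^{sp}$ and reduce the statement to a shift-equivariance property of the $\text{spmax}$ operator, namely that $\text{spmax}(z+b\mathbf{1}) = \text{spmax}(z) + b$ for every scalar $b$. Once this is in hand, the rest is immediate.

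First I would substitute $x + c\mathbf{1}$ into the definition and push $c$ through the inner sum using $\sum_{s'} T(s'|s,a) = 1$ for every $a$. This produces
\begin{equation*}
U^{sp}(x+c\mathbf{1})(s) = \alpha\,\text{spmax}\!\left(\frac{r(s,\cdot) + \gamma\sum_{s'} x(s')T(s'|s,\cdot)}{\alpha} + \frac{\gamma c}{\alpha}\mathbf{1}\right),
\end{equation*}
so after the claimed shift-equivariance of $\text{spmax}$ is applied (with $b = \gamma c/\alpha$), the $\alpha$ cancels and I land on $U^{sp}(x)(s) + \gamma c$.

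The heart of the proof is therefore to show $\text{spmax}(z+b\mathbf{1}) = \text{spmax}(z) + b$. I would proceed in two steps. The first step is to observe that the supporting set $\text{supp}(z)$ is invariant under a constant shift: from $\tau(z+b\mathbf{1}) = \tau(z) + b$ (by definition of $\tau$), the inequality $1 + i(z_{(i)}+b) > \sum_{j=1}^{i}(z_{(j)}+b)$ simplifies to the same condition defining $\text{supp}(z)$, and equivalently the sparsemax projection $p^*(z) = \max(z_i - \tau(z),0)$ is unchanged. Hence the cardinality $K$ is the same for $z$ and $z+b\mathbf{1}$. The second step is the algebraic computation: expanding $(z_{(i)}+b)^2 - (\tau(z)+b)^2 = z_{(i)}^2 - \tau(z)^2 + 2b(z_{(i)} - \tau(z))$ and summing over the $K$ elements of the support yields
\begin{equation*}
\text{spmax}(z+b\mathbf{1}) = \text{spmax}(z) + b\sum_{i=1}^{K}(z_{(i)} - \tau(z)) = \text{spmax}(z) + b,
\end{equation*}
where the last equality uses $\sum_{i=1}^{K}(z_{(i)} - \tau(z)) = \sum_{i=1}^{K} p^*_i(z) = 1$ because $p^*(z)$ lies on the probability simplex.

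The only slightly subtle step is confirming that $\text{supp}(z)$ does not change when $z$ is shifted by $b\mathbf{1}$; without this, the sum would be taken over a different index set and the cancellation would fail. Everything else is routine bookkeeping with the closed forms for $\tau$ and $p^*$ already established in the excerpt, together with the stochasticity of $T(\cdot|s,a)$.
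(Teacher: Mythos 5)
Your proposal is correct and follows essentially the same route as the paper's proof: push the constant through the expectation using $\sum_{s'}T(s'|s,a)=1$, then invoke the shift-equivariance $\text{spmax}(z+b\mathbf{1})=\text{spmax}(z)+b$ with $b=\gamma c/\alpha$. The only difference is that the paper cites Martins and Astudillo (2016) for that equivariance, whereas you prove it from scratch via the invariance of the supporting set and the identity $\sum_{i=1}^{K}\left(z_{(i)}-\tau(z)\right)=1$; your argument for both steps is sound and makes the proof self-contained.
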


The full proofs can be found in Appendix \ref{subsec:conv_opt_sp_value}.
The proofs of Lemma \ref{lem:mon} and Lemma \ref{lem:dis} rely on the
bounded property of the sparsemax operation. 
It is possible to prove that the sparse Bellman operator $U^{sp}$ is a
contraction using Lemma \ref{lem:mon} and Lemma \ref{lem:dis} 
as follows:
\begin{lemma}\label{lem:fixed}
$U^{sp}$ is a $\gamma$-contraction mapping and have a unique fixed
point, where $\gamma$ is in $(0,1)$ by definition. 
\end{lemma}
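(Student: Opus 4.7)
The plan is to follow the classical Bellman-style contraction argument, using the monotonicity (Lemma \ref{lem:mon}) and discounting (Lemma \ref{lem:dis}) properties already established for $U^{sp}$, and then invoke the Banach fixed point theorem on the complete metric space $(\mathbb{R}^{|\mathcal{S}|}, \|\cdot\|_\infty)$.

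First I would take arbitrary $x, y \in \mathbb{R}^{|\mathcal{S}|}$ and set $c \triangleq \|x - y\|_\infty$, so that element-wise we have $x \leq y + c\mathbf{1}$ and $y \leq x + c\mathbf{1}$. Applying monotonicity to the first inequality yields $U^{sp}(x) \leq U^{sp}(y + c\mathbf{1})$, and then the discounting property rewrites the right-hand side as $U^{sp}(y) + \gamma c \mathbf{1}$. Hence $U^{sp}(x) - U^{sp}(y) \leq \gamma c \mathbf{1}$ element-wise. Exchanging the roles of $x$ and $y$ gives the reverse bound $U^{sp}(y) - U^{sp}(x) \leq \gamma c \mathbf{1}$, so taking the element-wise maximum absolute value gives
\begin{equation*}
\|U^{sp}(x) - U^{sp}(y)\|_\infty \leq \gamma \|x - y\|_\infty,
\end{equation*}
which is exactly the $\gamma$-contraction property in the sup-norm.

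Having established contraction, I would finish by noting that $\mathbb{R}^{|\mathcal{S}|}$ equipped with $\|\cdot\|_\infty$ is a Banach space, so the Banach fixed point theorem \cite{smart1980fixed} immediately guarantees both the existence and uniqueness of a fixed point $x^* \in \mathbb{R}^{|\mathcal{S}|}$ satisfying $U^{sp}(x^*) = x^*$, and moreover that the iteration $x_{i+1} = U^{sp}(x_i)$ converges to $x^*$ from any initial point. Since $\gamma \in (0,1)$ by the definition of the MDP discount factor, the contraction constant is strictly less than one, which is exactly what the Banach theorem requires.

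There is essentially no serious obstacle here once Lemmas \ref{lem:mon} and \ref{lem:dis} are in hand; the argument is entirely structural and avoids ever manipulating the $\mathrm{spmax}$ operator directly. The only subtlety worth double-checking is that the inequalities are consistently interpreted element-wise and that the $\mathbf{1}$ appearing in the discounting lemma is the same all-ones vector used to bound $x$ and $y$, so that the two lemmas compose cleanly. This is why the sup-norm is the natural choice: it is precisely the norm whose unit ball is the $\pm\mathbf{1}$-box that the discounting lemma manipulates.
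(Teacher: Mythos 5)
Your proof is correct and follows essentially the same route as the paper: the paper likewise sandwiches $x$ between $y \pm M\mathbf{1}$ with $M = d_{\max}(x,y)$, applies monotonicity (Lemma \ref{lem:mon}) and the discounting property (Lemma \ref{lem:dis}) to obtain $d_{\max}(U^{sp}(x),U^{sp}(y)) \leq \gamma\, d_{\max}(x,y)$, and then invokes the Banach fixed point theorem on the complete metric space $(\mathbb{R}^{|\mathcal{S}|}, d_{\max})$. The $d_{\max}$ metric used in the paper is exactly your sup-norm, so the two arguments coincide.
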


Using Lemma \ref{lem:mon}, Lemma \ref{lem:dis}, and Lemma \ref{lem:fixed}, 
the optimality and convergence of sparse value iteration can be proven.
\begin{theorem}\label{thm:opt_sps_mdp}
Sparse value iteration converges to the optimal value of (\ref{eqn:sps_mdp}).
\end{theorem}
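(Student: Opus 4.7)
The plan is to combine the contraction property from Lemma \ref{lem:fixed} with a variational characterization of the sparsemax operator to link the fixed point of $U^{sp}$ to the optimization problem (\ref{eqn:sps_mdp}). By Lemma \ref{lem:fixed}, $U^{sp}$ is a $\gamma$-contraction on $\mathbb{R}^{|\mathcal{S}|}$ equipped with the sup-norm, so the Banach fixed point theorem immediately yields a unique fixed point $V^{*} \in \mathbb{R}^{|\mathcal{S}|}$, and the iterates $x_{i+1} = U^{sp}(x_i)$ converge to $V^{*}$ from any initialization $x_0$. What remains is to identify $V^{*}$ with the optimal value function of the sparse MDP.

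The key lemma I would establish first is the variational identity
\begin{equation*}
\alpha\,\text{spmax}\!\left(\frac{z}{\alpha}\right) = \max_{p \in \Delta} \left[\sum_{i} p_i z_i + \frac{\alpha}{2}\sum_i p_i(1 - p_i)\right],
\end{equation*}
where $\Delta$ is the probability simplex and the maximizer is precisely the sparsemax distribution $p^{*}(z/\alpha)$ given in Theorem \ref{thm:sp_bellman_eqn}. This follows from the KKT analysis of the simplex projection problem (\ref{eqn:simplex_prob}) combined with direct substitution, using that $\sum_i p_i^{*} z_i - \tfrac{1}{2}\sum_i (p_i^{*})^2 = \text{spmax}(z) - \tfrac{1}{2}$ on the support set. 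Consequently, if one defines for any policy $\pi$ the policy-specific operator $U^{sp}_{\pi}(x)(s) = \mathbf{r}^{sp}_{\pi}(s) + \gamma \sum_{s'} x(s') T_{\pi}(s'|s)$, then $U^{sp}(x)(s) = \max_{\pi(\cdot|s) \in \Delta} U^{sp}_{\pi}(x)(s)$, and hence $U^{sp}(x) \geq U^{sp}_{\pi}(x)$ elementwise for every $\pi$.

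Given this, I would proceed in two steps. First, define $\pi^{*}$ by applying the sparsemax formula of Theorem \ref{thm:sp_bellman_eqn} to $Q^{*}(s,a) = \mathbf{r}(s,a) + \gamma \sum_{s'} V^{*}(s') T(s'|s,a)$. Because $\pi^{*}(\cdot|s)$ attains the maximum in the variational identity at $z = Q^{*}(s,\cdot)$, we have $U^{sp}_{\pi^{*}}(V^{*}) = U^{sp}(V^{*}) = V^{*}$, so $V^{*}$ is a fixed point of the affine contraction $U^{sp}_{\pi^{*}}$. Since that contraction has $V^{sp}_{\pi^{*}}$ as its unique fixed point, we conclude $V^{sp}_{\pi^{*}} = V^{*}$. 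Second, for any other policy $\pi$, the inequality $U^{sp}_{\pi}(V^{*}) \leq U^{sp}(V^{*}) = V^{*}$ together with monotonicity of $U^{sp}_{\pi}$ (which is immediate from its linearity in $x$ with nonnegative coefficients) yields $(U^{sp}_{\pi})^{n}(V^{*}) \leq V^{*}$ for all $n$; taking $n \to \infty$ gives $V^{sp}_{\pi} \leq V^{*}$ elementwise. Integrating against $d$ produces $J^{sp}_{\pi} \leq J^{sp}_{\pi^{*}}$ for every feasible $\pi$, so $V^{*}$ is indeed the optimal value function of (\ref{eqn:sps_mdp}), and the sparse value iteration converges to it.

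The principal obstacle is the variational identity for $\text{spmax}$: without it, the operator $U^{sp}$ is only an algebraic expression and cannot be related to the original regularized objective. The derivation is routine once one notices that $\tfrac{\alpha}{2}\sum_a \pi(a|s)(1 - \pi(a|s))$ is a strictly concave quadratic in $\pi(\cdot|s)$ whose unconstrained maximizer over $\Delta$ coincides, by the KKT analysis of the simplex projection used in Theorem \ref{thm:sp_bellman_eqn}, with the sparsemax distribution. Once this identity is in place, the two-step comparison argument is standard and mirrors the convergence proofs for classical and soft value iteration.
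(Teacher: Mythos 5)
Your proof is correct, but it takes a genuinely different route from the paper's. The paper argues via \emph{necessity}: by Theorem \ref{thm:sp_bellman_eqn} (KKT conditions), any optimal policy's value must satisfy the sparse Bellman equation, i.e.\ must be a fixed point of $U^{sp}$; since Banach gives uniqueness of that fixed point, the optimal value can only be $x_{*}$, and the iteration converges there. You instead argue via \emph{sufficiency}, in the classical dynamic-programming style: you first establish the variational identity $\alpha\,\text{spmax}(z/\alpha) = \max_{p\in\Delta}\bigl[\sum_i p_i z_i + \tfrac{\alpha}{2}\sum_i p_i(1-p_i)\bigr]$ (which checks out --- completing the square turns the maximization into exactly the simplex projection (\ref{eqn:simplex_prob}), and substituting $p^{*}$ back recovers (\ref{eqn:spmax_dfn})), so that $U^{sp}(x)=\max_{\pi}U^{sp}_{\pi}(x)$; you then run the standard greedy-policy/policy-evaluation comparison to show the fixed point dominates $V^{sp}_{\pi}$ for every feasible $\pi$ and is attained by the sparsemax-greedy policy. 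What your approach buys is that it does not presuppose the existence of an optimal policy and does not lean on the KKT derivation at all --- it \emph{constructs} an optimal policy and certifies its optimality directly, which is arguably more self-contained than the paper's appeal to Theorem \ref{thm:sp_bellman_eqn}. What the paper's approach buys is brevity: given Theorem \ref{thm:sp_bellman_eqn} and Lemma \ref{lem:fixed}, the conclusion is immediate. The one point you should make explicit if you write this up in full is the fixed-point characterization $V^{sp}_{\pi}=U^{sp}_{\pi}(V^{sp}_{\pi})$ for the policy-evaluation operator (it follows from the recursive definition of $V^{sp}_{\pi}$ in Table \ref{tbl:notations} and the fact that $U^{sp}_{\pi}$ is itself an affine $\gamma$-contraction), since both steps of your comparison argument rest on it.
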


The proof can be found in Appendix \ref{subsec:conv_opt_sp_value}.
Theorem \ref{thm:opt_sps_mdp} is proven using the uniqueness of the
fixed point of $U^{sp}$ and the sparse Bellman equation. 


\section{Performance Error Bounds for Sparse Value Iteration}\label{sec:PEB}

\begin{algorithm*}[t!]
\small
\caption{Sparse Deep Q-Learning}
\begin{algorithmic}[1] \label{spdqn}
\STATE Initialize prioritized replay memory $M=\emptyset$, Q network parameters $\theta$ and $\theta^{-}$
\FOR {$i = 0$ to $N$}
\STATE Sample initial state $s_0\sim d_{0}(s)$
\FOR {$t = 0$ to $T$}
\STATE Sample action $a_t\sim\pi^{sp}(a|s_t)$ (\ref{eqn:sps_bellman})
\STATE Excute $a_t$ and observe next state $s_{t+1}$ and reward $\mathbf{r}_t$
\STATE Add experiences to replay memory $M$ with an initial importance weight, $M\leftarrow {(s_t, a_t, \mathbf{r}_t, s_{t+1}, w_{0})\cup M}$
\STATE Sample mini-batch $B$ from $M$ based on importance weight
\STATE Set a target value $y_j$ of $(s_j, a_j, \mathbf{r}_j, s_{j+1}, w_{j})$ in $B$, $y_j = \mathbf{r}_j + \gamma \alpha \text{spmax}\left(\frac{Q(s_{j+1},\cdot;\theta^{-})}{\alpha} \right)$
\STATE Minimize $\sum_{j}w_{j}\left(y_j - Q(s_j,a_j;\theta)\right)^2$ using a gradient descent method
\STATE Update importance weights $\{w_{j}\}$ based on temporal difference error $\delta_j = |y_j - Q(s_j,a_j;\theta)|$ \cite{schaul2015prioritized}
\ENDFOR
\STATE Update $\theta^{-} = \theta$ every $c$ iteration
\ENDFOR
\end{algorithmic}
\end{algorithm*}

We prove the bounds of the performance gap between the policy obtained
by a sparse MDP and the policy obtained by the original MDP, where
this performance error is caused by regularization. 
The boundedness of (\ref{eqn:spmax_bnd}) plays an crucial role to prove the error bounds.
The performance bounds can be derived from bounds of \textit{sparsemax}.
A similar approach can be applied to prove the error bounds of a soft
MDP since a log-sum-exp function is also a bounded approximation of
the \textit{max} operation.
Comparison of log-sum-exp and sparsemax operation is provided in Appendix \ref{subsec:comp_lse_spmax}

Before explaining the performance error bounds,
we introduce two useful propositions which are employed to prove the
performance error bounds of a sparse MDP and a soft MDP.
We first prove an important fact which shows that the optimal values
of sparse value iteration and soft value iteration are greater than
that of the original MDP. 

\begin{lemma}\label{lem:operation}
Let $U$ and $U^{soft}$ be the Bellman operations of an original MDP
and soft MDP, respectively, 
such that, for state $s$ and $x \in \mathbb{R}^{|\mathcal{S}|}$, 
\begin{eqnarray*}
\small
\begin{aligned}
&U(x)(s) = \max_{a'}\left(r(s,a') + \gamma \sum_{s'}x(s')T(s'|s,a')\right)\\
&U^{soft}(x)(s) =\alpha\log\sum_{a'}\exp\left(\frac{r(s,a') + \gamma \sum_{s'}x(s')T(s'|s,a')}{\alpha}\right).
\end{aligned}
\end{eqnarray*}
Then following inequalities hold for every integer $n>0$:
{\small
\begin{equation*}
U^{n}(x) \leq (U^{sp})^{n}(x),\;\;
U^{n}(x) \leq (U^{soft})^{n}(x),
\end{equation*}}
where $U^{n}$ (resp., $(U^{sp})^{n}$) is the result after applying $U$
(resp., $U^{sp}$) $n$ times.
In addition, let $x_{*},\;x_{*}^{sp}$ and $x_{*}^{soft}$ be the fixed
points of $U,\;U^{sp}$ and $U^{soft}$, respectively. 
Then, following inequalities also hold:
\begin{equation*}
x_{*} \leq x_{*}^{sp},\;\;
x_{*} \leq x_{*}^{soft}.
\end{equation*}
\end{lemma}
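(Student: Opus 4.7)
The plan is to prove the inequalities operator by operator using the bounds on the smooth approximations, and then iterate via the monotonicity of each Bellman operator.

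\textbf{Step 1: Pointwise single-step inequality.} First I would establish $U(x) \leq U^{sp}(x)$ and $U(x) \leq U^{soft}(x)$ for every $x \in \mathbb{R}^{|\mathcal{S}|}$. This follows immediately from the lower bounds of the two smooth max approximations: the lower bound in \eqref{eqn:spmax_bnd} gives $\max(z) \leq \alpha\,\text{spmax}(z/\alpha)$, and the analogous classical inequality $\max(z) \leq \alpha\log\sum_i \exp(z_i/\alpha)$ gives the corresponding lower bound for logsumexp. Applying these pointwise with $z_a = r(s,a) + \gamma\sum_{s'} x(s')T(s'|s,a)$ yields the desired one-step domination at every state $s$.

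\textbf{Step 2: Iteration using monotonicity.} Next I would promote the one-step inequality to $n$ steps by induction. The base case $n=1$ is Step 1. For the inductive step, suppose $U^{n}(x) \leq (U^{sp})^{n}(x)$. Applying Step 1 with the argument $U^{n}(x)$ gives $U(U^{n}(x)) \leq U^{sp}(U^{n}(x))$, and then applying monotonicity of $U^{sp}$ (Lemma \ref{lem:mon}) to the inductive hypothesis gives $U^{sp}(U^{n}(x)) \leq U^{sp}((U^{sp})^{n}(x)) = (U^{sp})^{n+1}(x)$. Chaining these yields $U^{n+1}(x) \leq (U^{sp})^{n+1}(x)$. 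The argument for $U^{soft}$ is identical once we note that $U^{soft}$ is also monotone (it is the composition of the affine map $a \mapsto r(s,a) + \gamma\sum_{s'}x(s')T(s'|s,a)$ with the coordinate-wise increasing logsumexp function).

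\textbf{Step 3: Passing to the fixed points.} Finally I would take $n \to \infty$ on both sides. Pick any $x$ (say $x = 0$). Standard $\gamma$-contraction of $U$ gives $U^{n}(x) \to x_{*}$, and Lemma \ref{lem:fixed} gives $(U^{sp})^{n}(x) \to x_{*}^{sp}$; a fully analogous contraction argument (which the paper cites from \cite{bloem2014infinite}) gives $(U^{soft})^{n}(x) \to x_{*}^{soft}$. Since the element-wise order $\leq$ on $\mathbb{R}^{|\mathcal{S}|}$ is preserved under limits in the sup-norm topology, passing to the limit in the $n$-step inequalities of Step 2 yields $x_{*} \leq x_{*}^{sp}$ and $x_{*} \leq x_{*}^{soft}$.

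The proof is essentially routine once Step 1 is in hand, so the only delicate point is confirming the lower bound half of \eqref{eqn:spmax_bnd}, but that is supplied in Appendix \ref{subsec:upper_lower_bnd_spmax}; the corresponding logsumexp bound is the classical $\max_i z_i \leq \log\sum_i \exp(z_i)$ scaled by $\alpha$. The main conceptual obstacle, if any, is simply making sure that the coupling between the one-step domination and the monotonicity is written unambiguously so that the induction in Step 2 does not accidentally require monotonicity of $U$ (which we do not state formally here) rather than of $U^{sp}$ and $U^{soft}$; as laid out above, only the monotonicity of the \emph{regularized} operators is needed.
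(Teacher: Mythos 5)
Your proposal is correct and follows essentially the same route as the paper's proof: a one-step domination from the lower bounds $\max(z)\le\alpha\,\mathrm{spmax}(z/\alpha)$ and $\max(z)\le\alpha\log\sum_i\exp(z_i/\alpha)$, an induction that combines this with monotonicity of the regularized operators, and a passage to the limit to compare fixed points. The only cosmetic difference is that you decompose $U^{n+1}(x)=U(U^{n}(x))$ while the paper writes $U^{k+1}(x)=U^{k}(U(x))$; both orderings yield the same conclusion.
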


The detailed proof is provided in Appendix \ref{sec:error_bnd}.
Lemma \ref{lem:operation} shows that the optimal values,
$V^{sp}_{\pi}$ and $V^{soft}_{\pi}$, obtained by sparse value
iteration and soft value iteration are always greater than the
original optimal value $V_{\pi}$. 
Intuitively speaking, the reason for this inequality is due to the
regularization term, i.e., $W(\pi)$ or $H(\pi)$, added to the
objective function.

Now, we discuss other useful properties about the proposed causal
sparse Tsallis entropy regularization $W(\pi)$ and causal entropy
regularization $H(\pi)$. 

\begin{lemma} \label{lem:reg_bnd}
$W(\pi)$ and $H(\pi)$ have following upper bounds:
\begin{equation*}
W(\pi) \leq \frac{1}{1-\gamma}\frac{|\mathcal{A}|-1}{2|\mathcal{A}|},\;\;
H(\pi) \leq \frac{\log(|\mathcal{A}|)}{1-\gamma}
\end{equation*}
where $|\mathcal{A}|$ is the cardinality of the action space $\mathcal{A}$.
\end{lemma}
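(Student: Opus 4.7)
The plan is to reduce the problem to bounding the per-state regularizer $\sum_{a}\pi(a|s)(1-\pi(a|s))/2$ (resp.\ $-\sum_{a}\pi(a|s)\log\pi(a|s)$) uniformly in $s$, and then to pull that state-independent bound out of the discounted expectation, where the remaining geometric series contributes the $1/(1-\gamma)$ factor.

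First I would expand the definitions from Table~\ref{tbl:notations}:
\begin{equation*}
W(\pi) = \mathbb{E}\!\left[\sum_{t=0}^{\infty}\gamma^{t}\,\tfrac{1}{2}\bigl(1-\pi(a_t|s_t)\bigr)\,\Big|\,\pi,d,T\right]
= \mathbb{E}\!\left[\sum_{t=0}^{\infty}\gamma^{t}\,S_{2,\frac{1}{2}}\bigl(\pi(\cdot|s_t)\bigr)\,\Big|\,\pi,d,T\right],
\end{equation*}
where the second equality uses Theorem~\ref{thm:causal_tsallis_entropy}, and similarly $H(\pi) = \mathbb{E}[\sum_{t=0}^\infty \gamma^t S_{1,1}(\pi(\cdot|s_t))\mid\pi,d,T]$. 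So each regularizer is a discounted sum of a state-wise functional of $\pi(\cdot|s)$ on the probability simplex $\Delta^{|\mathcal{A}|-1}$.

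Next I would establish the pointwise simplex bounds. For $S_{2,1/2}$, I would write $S_{2,1/2}(p) = \frac{1}{2}(1-\sum_i p_i^2)$ and observe that over the simplex $\sum_i p_i^2$ is minimized by the uniform distribution $p_i=1/|\mathcal{A}|$ (e.g., by Cauchy--Schwarz $\sum_i p_i^2 \ge (\sum_i p_i)^2/|\mathcal{A}| = 1/|\mathcal{A}|$), giving
\begin{equation*}
S_{2,\frac{1}{2}}\bigl(\pi(\cdot|s)\bigr) \le \tfrac{1}{2}\bigl(1-1/|\mathcal{A}|\bigr) = \frac{|\mathcal{A}|-1}{2|\mathcal{A}|}.
\end{equation*}
For Shannon entropy, the standard bound $-\sum_i p_i\log p_i \le \log|\mathcal{A}|$ (again attained at the uniform distribution, e.g.\ by Jensen's inequality applied to $-\log$) gives $S_{1,1}(\pi(\cdot|s)) \le \log|\mathcal{A}|$.

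Finally, since these two pointwise bounds are constants independent of the state $s_t$ and of the trajectory, I would pull them out of the expectation and sum the geometric series:
\begin{equation*}
W(\pi) \le \sum_{t=0}^{\infty}\gamma^{t}\,\frac{|\mathcal{A}|-1}{2|\mathcal{A}|} = \frac{1}{1-\gamma}\,\frac{|\mathcal{A}|-1}{2|\mathcal{A}|},\qquad
H(\pi) \le \sum_{t=0}^{\infty}\gamma^{t}\log|\mathcal{A}| = \frac{\log|\mathcal{A}|}{1-\gamma}.
\end{equation*}
There is no real obstacle here; the only step deserving a sentence of justification is the simplex maximization, which is either a one-line Cauchy--Schwarz (for $W$) or the standard maximum-entropy fact (for $H$). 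Everything else is linearity of expectation plus the geometric series.
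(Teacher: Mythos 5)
Your proof is correct and follows essentially the same route as the paper: both bound the per-state regularizer uniformly over the simplex by its value at the uniform distribution and then extract the factor $\tfrac{1}{1-\gamma}$ from the discounted sum (the paper organizes that sum via the state visitation $\rho_{\pi}$ with $\sum_{s}\rho_{\pi}(s)=\tfrac{1}{1-\gamma}$, whereas you sum the geometric series over time — an equivalent bookkeeping). If anything, your Cauchy--Schwarz/Jensen justification of the simplex maximization is more careful than the paper's appeal to setting a single-variable derivative to zero.
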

The proof is provided in Appendix \ref{sec:error_bnd}.
Theorem \ref{lem:reg_bnd} can be induced by extending the upper bound of $S_{1,1}(\pi)$ and $S_{2,\frac{1}{2}}(\pi)$ to the causal entropy and causal sparse Tsallis entropy.

By using Lemma \ref{lem:operation} and Lemma \ref{lem:reg_bnd},
the performance bounds for a sparse MDP and a soft MDP can be derived
as follows. 
\begin{theorem}\label{thm:bnd_sps_mdp}
Following inequalities hold:
{\small
\begin{equation*}
\mathbb{E}_{\pi^{*}}(\mathbf{r}(s,a)) - \frac{\alpha}{1-\gamma}\frac{|\mathcal{A}|-1}{2|\mathcal{A}|} \leq \mathbb{E}_{\pi^{sp}}(\mathbf{r}(s,a)) \leq \mathbb{E}_{\pi^{*}}(\mathbf{r}(s,a)), 
\end{equation*}
}
where $\pi^{*}$ and $\pi^{sp}$ are the optimal policy obtained by the
original MDP and a sparse MDP, respectively. 
\end{theorem}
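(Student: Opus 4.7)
The plan is to prove the two inequalities separately. The upper bound is essentially trivial: since $\pi^{*}$ is the maximizer of the unregularized expected return $\mathbb{E}_{\pi}[\mathbf{r}(s,a)]$ over all valid policy distributions, any other feasible policy (including $\pi^{sp}$, which is feasible by construction) must give a value no larger than $\mathbb{E}_{\pi^{*}}[\mathbf{r}(s,a)]$. I would state this in one line and move on.

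For the lower bound, the key is to link the unregularized return of $\pi^{sp}$ back to the optimal regularized return via Lemma \ref{lem:operation} and then strip off the regularizer using Lemma \ref{lem:reg_bnd}. Concretely, I would first observe that Lemma \ref{lem:operation} gives the pointwise inequality $V^{*}(s) \leq V^{sp}_{*}(s)$ between the fixed points of $U$ and $U^{sp}$. Taking the inner product with the initial state distribution $d$ on both sides yields the utility-level inequality
\begin{equation*}
\mathbb{E}_{\pi^{*}}[\mathbf{r}(s,a)] = d^{\intercal} V^{*} \leq d^{\intercal} V^{sp}_{*} = J^{sp}_{\pi^{sp}},
\end{equation*}
where I use the definition of $J^{sp}$ from Table \ref{tbl:notations}.

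Next I would expand the right-hand side using the definition of the sparse utility,
\begin{equation*}
J^{sp}_{\pi^{sp}} = \mathbb{E}_{\pi^{sp}}[\mathbf{r}(s,a)] + \alpha W(\pi^{sp}),
\end{equation*}
and rearrange to obtain $\mathbb{E}_{\pi^{sp}}[\mathbf{r}(s,a)] \geq \mathbb{E}_{\pi^{*}}[\mathbf{r}(s,a)] - \alpha W(\pi^{sp})$. Applying the upper bound on $W$ from Lemma \ref{lem:reg_bnd}, namely $W(\pi^{sp}) \leq \frac{1}{1-\gamma}\frac{|\mathcal{A}|-1}{2|\mathcal{A}|}$, then gives the stated lower bound.

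I do not expect a serious obstacle: both Lemma \ref{lem:operation} and Lemma \ref{lem:reg_bnd} are already in hand, and the remaining work is a single rearrangement plus the use of $d^{\intercal} \mathbf{1} = 1$ to pass from the pointwise value-function inequality to the scalar utility inequality. The only subtlety worth flagging explicitly is that the bound on $W(\pi^{sp})$ is uniform in the policy, so it applies regardless of how sparse $\pi^{sp}$ actually turns out to be; this is exactly what makes the resulting performance gap independent of $|\mathcal{A}|$ up to the constant $\tfrac{|\mathcal{A}|-1}{2|\mathcal{A}|} < \tfrac{1}{2}$, in contrast with the $\log |\mathcal{A}|$ growth that would arise if $H(\pi)$ were used instead.
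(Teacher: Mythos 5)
Your proposal is correct and follows essentially the same route as the paper's proof: the upper bound by optimality of $\pi^{*}$ over all feasible policies, and the lower bound by combining the fixed-point inequality $V_{*} \leq V^{sp}_{*}$ from Lemma \ref{lem:operation} with the decomposition $J^{sp} = \mathbb{E}_{\pi^{sp}}[\mathbf{r}(s,a)] + \alpha W(\pi^{sp})$ and the uniform bound on $W$ from Lemma \ref{lem:reg_bnd}. No gaps.
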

\begin{theorem}\label{thm:bnd_soft_mdp}
Following inequalities hold:
{\small
\begin{equation*}
\mathbb{E}_{\pi^{*}}(\mathbf{r}(s,a)) - \frac{\alpha}{1-\gamma}\log(|\mathcal{A}|) \leq \mathbb{E}_{\pi^{soft}}(\mathbf{r}(s,a)) \leq \mathbb{E}_{\pi^{*}}(\mathbf{r}(s,a)) 
\end{equation*}
}
where $\pi^{*}$ and $\pi^{soft}$ are the optimal policy obtained by
the original MDP and a soft MDP, respectively.
\end{theorem}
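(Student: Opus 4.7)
The plan is to mirror the structure that presumably proves Theorem \ref{thm:bnd_sps_mdp}, but substituting the entropy regularizer $H(\pi)$ and its bound $\log(|\mathcal{A}|)/(1-\gamma)$ for the Tsallis regularizer $W(\pi)$ and its bound. Both inequalities then follow from two short arguments, one for each direction.

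For the upper bound $\mathbb{E}_{\pi^{soft}}(\mathbf{r}(s,a)) \leq \mathbb{E}_{\pi^{*}}(\mathbf{r}(s,a))$, the reasoning is essentially by definition: $\pi^{*}$ is the maximizer of the unregularized objective $\mathbb{E}_{\pi}[\mathbf{r}(s,a)]$ over the full set of admissible policies, so any other policy, including $\pi^{soft}$, achieves at most this value. I would state this in one line.

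For the lower bound I would invoke Lemma \ref{lem:operation}. Let $V_{*}$ and $V_{*}^{soft}$ denote the fixed points of $U$ and $U^{soft}$ respectively; the lemma gives $V_{*}(s) \leq V_{*}^{soft}(s)$ pointwise, and taking an expectation with respect to the initial distribution $d$ yields $J_{\pi^{*}} \leq J_{\pi^{soft}}^{soft}$. Unpacking the soft utility from Table \ref{tbl:notations}, this reads
\begin{equation*}
\mathbb{E}_{\pi^{*}}(\mathbf{r}(s,a)) \leq \mathbb{E}_{\pi^{soft}}(\mathbf{r}(s,a)) + \alpha H(\pi^{soft}).
\end{equation*}
Rearranging and then applying Lemma \ref{lem:reg_bnd} to upper bound $H(\pi^{soft})$ by $\log(|\mathcal{A}|)/(1-\gamma)$ gives the advertised lower bound. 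Strictly speaking, Lemma \ref{lem:operation} tells us about the fixed point of $U^{soft}$, so I would make one additional remark connecting that fixed point to $J_{\pi^{soft}}^{soft}$ via Theorem \ref{thm:opt_sps_mdp}'s soft analogue (the optimality of soft value iteration, which the excerpt attributes to \cite{bloem2014infinite}); once this identification is made the chain of inequalities is immediate.

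No step looks technically hard; the only care needed is bookkeeping between the regularized and unregularized utilities, specifically to avoid accidentally lower-bounding with $H(\pi^{*})$ (which is not what Lemma \ref{lem:reg_bnd} controls on the relevant side) instead of $H(\pi^{soft})$. The whole argument is parallel to Theorem \ref{thm:bnd_sps_mdp} and should occupy only a few lines once the two lemmas are cited.
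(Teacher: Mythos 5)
Your proposal is correct and follows essentially the same route as the paper: the right-hand inequality by definition of optimality, and the left-hand inequality via Lemma \ref{lem:operation} giving $V_{*} \leq V_{*}^{soft}$, the identification $d^{\intercal}V_{*}^{soft} = J^{soft}_{*} = \mathbb{E}_{\pi^{soft}}(\mathbf{r}(s,a)) + \alpha H(\pi^{soft})$, and Lemma \ref{lem:reg_bnd} to bound $H(\pi^{soft})$. Your extra remark about justifying the identification of the fixed point of $U^{soft}$ with the optimal soft utility is a point the paper passes over implicitly, so including it only strengthens the argument.
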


The proofs of Theorem \ref{thm:bnd_sps_mdp} and Theorem \ref{thm:bnd_soft_mdp} can be found in Appendix \ref{sec:error_bnd}.
These error bounds show us that the expected return of the optimal
policy of a sparse MDP has always tighter error bounds than that of a
soft MDP. 
Moreover, it can be also known that the bounds for the proposed sparse MDP converges to a constant $\frac{\alpha}{2(1-\gamma)}$ as the number of actions increases, 
whereas the error bounds of soft MDP grows logarithmically.

This property has a clear benefit when a sparse MDP is applied to a
robotic problem with a continuous action space.
To apply an MDP to a continuous action space,
a discretization of the action space is essential and 
a fine discretization is required to obtain a solution which
is closer to the underlying continuous optimal policy. 
Accordingly, the number of actions becomes larger as the level of
discretization increases. 
In this case, a sparse MDP has advantages over a soft MDP in that the
performance error of a sparse MDP is bounded by a constant factor as
the number of actions increases, whereas performance error of optimal
policy of a soft MDP grows logarithmically. 

\section{Sparse Exploration and Update Rule for Sparse Deep Q-Learning}

In this section, we first propose sparse Q-learning and further extend
to sparse deep Q-learning where a sparsemax policy and the sparse Bellman
equation are employed as a exploration method and update rule. 

Sparse Q-learning is a model free method to solve the proposed sparse
MDP without the knowledge of transition probabilities. 
In other words, when the transition probability $T(s'|a,s)$ is unknown
but sampling from $T(s'|a,s)$ is possible, sparse Q-learning
estimates an optimal $Q^{sp}$ of the sparse MDP using sampling, as
Q-learning finds an approximated value of an optimal $Q$ of the
conventional MDP.  
Similar to Q-learning, the update equation of sparse Q-learning is
derived from the sparse Bellman equation, 
\begin{eqnarray*}
\begin{aligned}
&Q^{sp}(s_i,a_i) \leftarrow Q^{sp}(s_i,a_i) +\\
&\eta(i)\left[\mathbf{r}(s_{i},a_{i}) + \gamma \alpha \text{spmax}\left(\frac{Q^{sp}(s_{i+1},\cdot)}{\alpha} \right)  - Q(s_i,a_i)\right],
\end{aligned}
\end{eqnarray*}
where $i$ indicates the number of iterations and $\eta(i)$ is a learning rate.
If the learning rate $\eta(i)$ satisfies $\sum^{\infty}_{i=0}\eta(i) =
\infty$ and $\sum^{\infty}_{i=0}\eta(i)^{2} < \infty$, 
then, as the number of samples increases to infinity, sparse
Q-learning converges to the optimal solution of a sparse MDP. 
The proof of the convergence and optimality of sparse Q-learning is
the same as that of the standard Q-learning \cite{watkins1992q}. 

The proposed sparse Q-learning can be easily extended to sparse deep Q-learning
using a deep neural network as an estimator of the sparse Q value.
In each iteration, sparse deep Q-learning performs a gradient descent
step to minimize the squared loss 
$(y  -  Q(s,a;\theta))^2$, where $\theta$ is the parameter of the Q network.
Here, $y$ is the target value defined as follows:
\begin{eqnarray*}
\begin{aligned}
y = \mathbf{r}(s,a) + \gamma \alpha \text{spmax}\left(\frac{Q(s',\cdot;\theta)}{\alpha} \right),
\end{aligned}
\end{eqnarray*}
where $s'$ is the next state sampled by taking action $a$ at the state $s$
and $\theta$ indicates network parameters.

Moreover, we employ the sparsemax policy as the exploration strategy
where the policy distribution is computed by (\ref{eqn:sps_bellman})
with action values estimated by a deep Q network.  
The sparsemax policy excludes the action whose estimated action
value is too low to be re-explored, by assigning zero probability
mass. 
The effectiveness of the sparsemax exploration is investigated in
Section \ref{exp}.

For stable convergence of a Q network, we utilize double Q-learning
\cite{van2016deep}, where the parameter $\theta$ for obtaining a
policy and the parameter $\theta^{-}$ for computing the target value are separated
and $\theta^{-}$ is updated to $\theta$ at every predetermined iterations.
In other words, double Q-learning prevents instability of deep
Q-learning by slowly updating the target value. 
Prioritized experience replay \cite{schaul2015prioritized} is also applied 
where the optimization of a network proceeds in consideration of the
importance of experience. 
The whole process of sparse deep Q-learning is summarized in Algorithm \ref{spdqn}.

\begin{figure}[t]
\centering
\subfigure	[Performance Bounds]{\label{exp:performance_bnd}
  \centering
  \includegraphics[width=.23\textwidth]{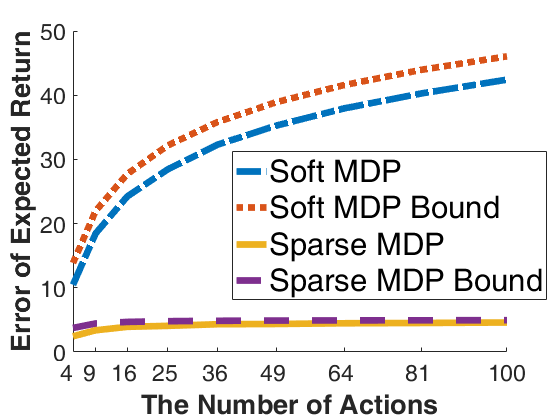}}
\subfigure[Supporting Set Comparison]{\label{exp:supp_set}
  \centering
  \includegraphics[width=.23\textwidth]{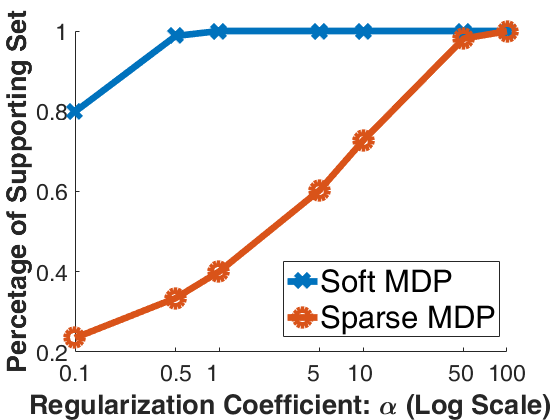}}
\caption{
(a) The performance gap is calculated as the absolute value of the difference between the performance of sparse MDP or soft MDP and the performance of an original MDP.
(b) The ratio of the number of supporting actions to the total number of actions is shown.
The action space of unicycle dynamics is discretized into $25$ actions.
}
\end{figure}

\section{Experiments}\label{exp}
We first verify Theorem \ref{thm:bnd_sps_mdp}, Theorem \ref{thm:bnd_soft_mdp} 
and the effect of (\ref{eqn:supp_cnd}) in simulation.  
For verification of Theorem \ref{thm:bnd_sps_mdp} and Theorem
\ref{thm:bnd_soft_mdp}, we measure the performance of the expected
return while increasing the number of actions, $|\mathcal{A}|$. 
For verification of the effect of (\ref{eqn:supp_cnd}), the
cardinality of the supporting set of optimal policies of sparse and
soft MDP are compared at different values of $\alpha$. 

To investigate effectiveness of the proposed method, 
we test sparsemax exploration and the sparse Bellman update rule on
reinforcement learning with a continuous action space. 
To apply Q-learning to a continuous action space, a fine
discretization is necessary to obtain a solution which is closer to 
the original continuous optimal policy. 
As the level of discretization increases, the number of actions to be
explored becomes larger. 
In this regards, an efficient exploration method is required to obtain
high performance. 
We compare our method to other exploration methods with respect to the 
convergence speed and the expected sum of rewards. 
We further check the effect of the update rule.

\subsection{Experiments on Performance Bounds and Supporting Set}

To verify our theorem about performance error bounds, we create a
transition model $T$ by discretization of unicycle dynamics defined in
a continuous state and action space and solve the original MDP, a soft
MDP and a sparse MDP under predefined rewards while increasing the
discretization level of the action space.
The reward function is defined as a linear combination of two squared exponential functions, i.e., $\mathbf{r}(x) = \exp\left(\frac{||x-x_{1}||^2}{2\sigma_{1}^{2}}\right)-\exp\left(\frac{||x-x_{2}||^2}{2\sigma_{2}^{2}}\right)$,
where $x$ is a location of a unicycle, $x_1$ is a goal point, $x_2$ is
the point to avoid, and $\sigma_1$ and $\sigma_2$ are scale
parameters. 
The reward function is designed to let an agent to navigate towards
$x_1$ while avoiding $x_2$.
The absolute value of differences between the expected return of the
original MDP and that of sparse MDP (or soft MDP) is measured. 
As shown in Figure \ref{exp:performance_bnd}, the performance gap of
sparse MDP converges to a constant bound while the performance of the
soft MDP grows logarithmically. 
Note that the performance gaps of the sparse MDP and soft MDP are
always smaller than their error bounds. 
Supporting set experiments are conducted using discretized unicycle dynamics. 
The cardinality of optimal policies are measured while $\alpha$ varies
from $0.1$ to $100$. 
In Figure \ref{exp:supp_set}, while the ratio of the supporting set
for a soft MDP is changed from $0.79$ to $1.00$, the ratio for a
sparse MDP is changed from $0.24$ to $0.99$, demonstrating the
sparseness of the proposed sparse MDPs compared to soft MDPs.

\subsection{Reinforcement Learning in a Continuous Action Space}

We test our method in MuJoCo \cite{todorov2012mujoco}, a physics-based
simulator, using two problems with a continuous action space:
\textit{Inverted Pendulum} and \textit{Reacher}. 
The action space is discretized to apply Q-learning to a continuous
action space and experiments are conducted with four different
discretization levels to validate the effectiveness of sparsemax
exploration and the sparse Bellman update rule.

We compare the sparsemax exploration method to the $\epsilon$-greedy
method and softmax exploration \cite{vamplew2017softmax} 
and further compare the sparse Bellman update rule to the original
Bellman update rule \cite{watkins1992q} and the soft Bellman update
rule \cite{bloem2014infinite}. 
In addition, three different regularization coefficient settings are
experimented. 
In total, we test $27$ combinations of variants of deep Q-learning by
combining three exploration methods, three update rules, and three
different regularization coefficients of $0.01, 0.1$, and $1$.
The deep deterministic policy gradient (DDPG) method
\cite{lillicrap2015continuous}, which operates in a continuous action
space without discretization of the action space, is also compared\footnote{
To test DDPG, we used the code from Open AI available at \url{https://github.com/openai/baselines}.}.
Hence, a total of $28$ algorithms are tested.

Results are shown in Figure \ref{exp:invert} and Figure
\ref{exp:reacher} for inverted pendulum and reacher, respectively,
where only the top five algorithms are plotted and each
point in a graph is obtained by averaging the values from three
independent runs with different random seeds.
Results of all $28$ algorithms are provided in Appendix \ref{sec:exp_full}.
Q network with two 512 dimensional hidden layers is used for the inverted
pendulum problem and a network with four 256 dimensional hidden layers
is used for the reacher problem. 
Each Q-learning algorithm utilizes the same network topology.
For inverted pendulum, since the problem is easier than the reacher
problem, most of top five algorithms converge to the maximum return
of $1000$ at each discretization level as shown in Figure
\ref{exp:perf_invert}. 
Four of top five algorithms utilize the proposed sparsemax exploration.
Only one of the top five methods utilizes the softmax
exploration. 
In Figure \ref{exp:epi_invert}, the number of episodes required to
reach a near optimal return, 980, is shown.
The sparsemax exploration requires a less number of episodes to
obtain a near optimal value than $\epsilon$-greedy, softmax exploration. 

For the reacher problem, the algorithms with sparsemax exploration
slightly outperforms $\epsilon$-greedy methods and the performance of
softmax exploration is not included in the top five as shown in
Figure \ref{exp:perf_reacher}.  
In terms of the number of required episodes, sparsemax exploration
outperforms epsilon greedy methods as shown in Figure
\ref{exp:epi_reacher}, where we set the threshold return to be $-6$. 
DDPG shows poor performances in both problems since the number of
sampled episodes is insufficient. 
In this regards, deep Q-learning with sparsemax exploration
outperforms DDPG with less number of episodes. 
From these experiments, it can be known that the sparsemax
exploration method has an advantage over softmax exploration,
$\epsilon$-greedy method and DDPG with respect to the number of
episodes required to reach the optimal performance. 

\begin{figure}[t]
\subfigure[Expected Return]{\label{exp:perf_invert}
  \centering
  \includegraphics[width=.23\textwidth]{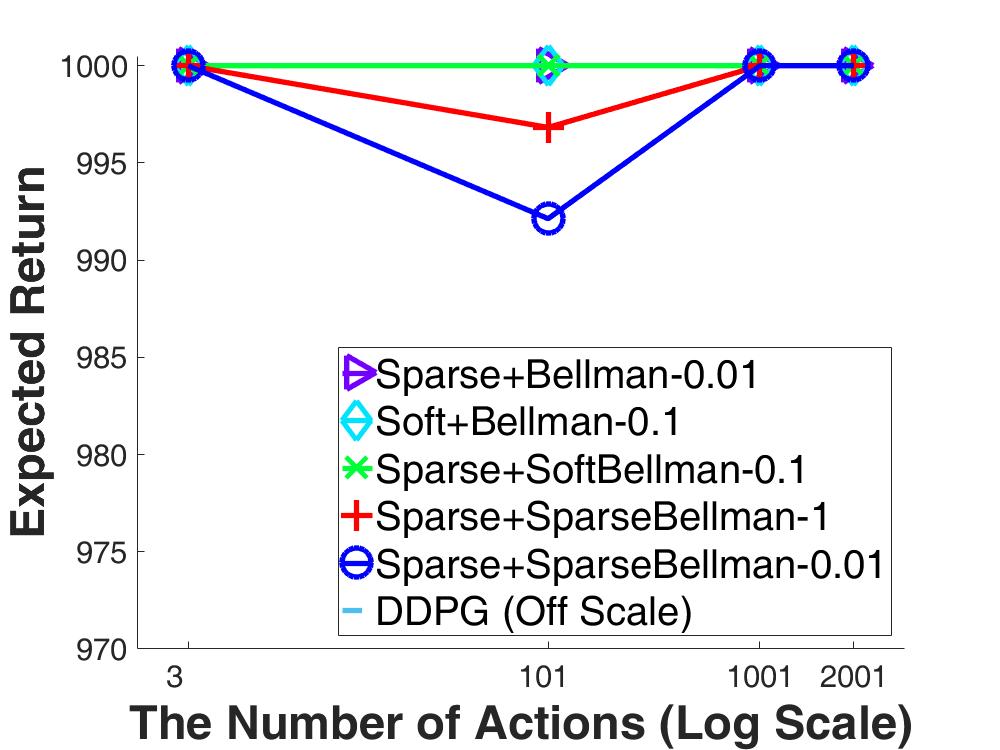}}
\subfigure	[Required Episodes]{\label{exp:epi_invert}
  \centering
  \includegraphics[width=.23\textwidth]{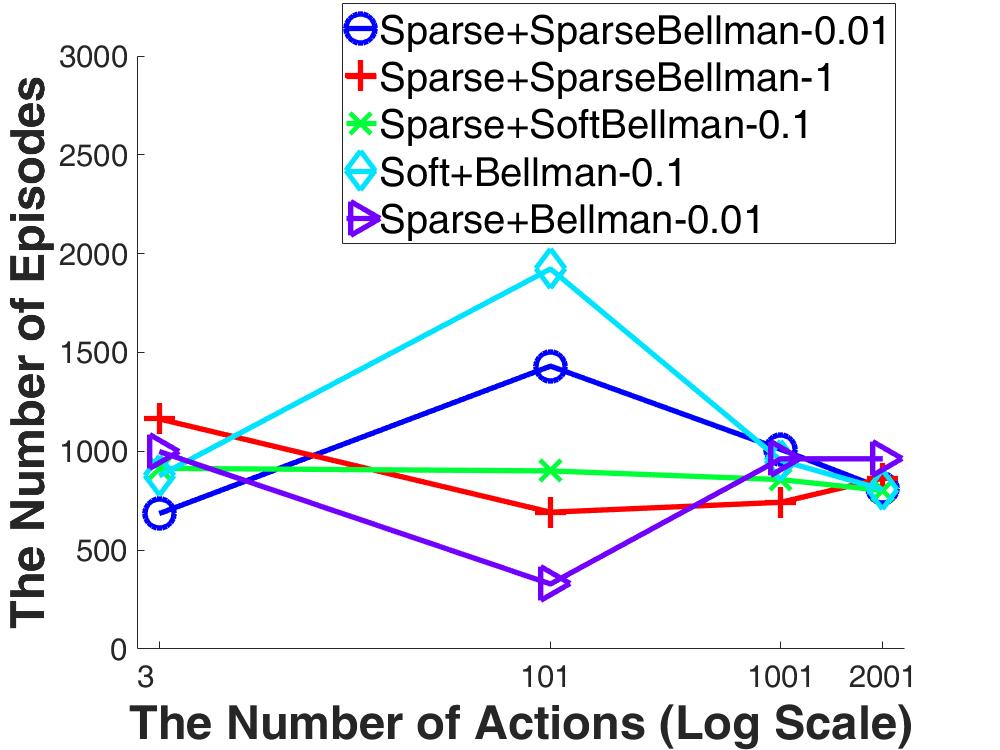}}
\caption{
\small
Inverted pendulum problem.
Algorithms are named as \texttt{<exploration method>+<update rule>+<$\alpha$>}.
(a) The average performance of each algorithm after $3000$ episodes.
The performance of DDPG is out of scale.
(b) The average number of episodes required to reach the threshold
value $980$.}
\label{exp:invert}
\end{figure}

\begin{figure}[t]
\subfigure[Expected Return]{\label{exp:perf_reacher}
  \centering
  \includegraphics[width=.23\textwidth]{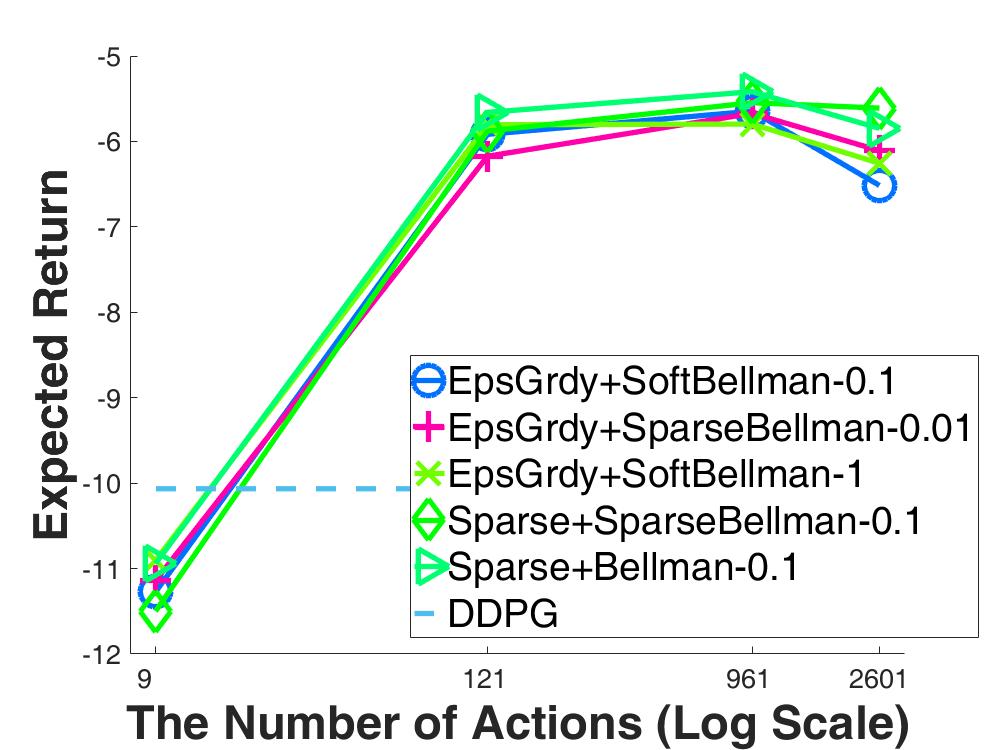}}
\subfigure	[Required Episodes]{\label{exp:epi_reacher}
  \centering
  \includegraphics[width=.23\textwidth]{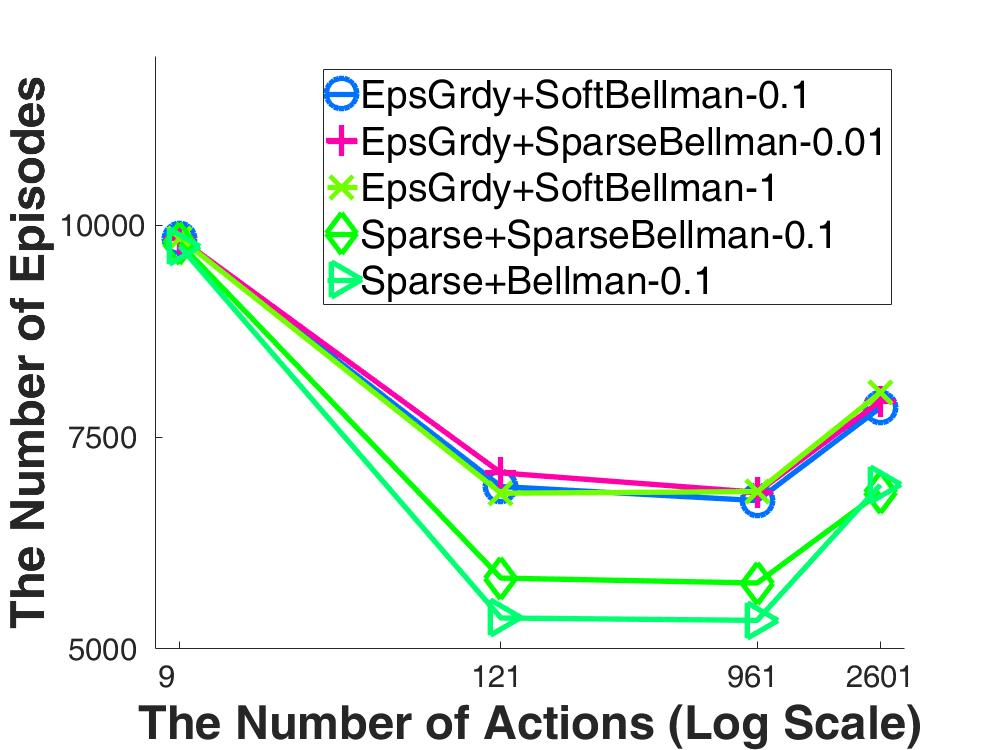}}
\caption{
\small
Reacher problem.
(a) The average performance of each algorithm after $10000$ episodes.
(b) The average number of episodes required to reach the threshold value $-6$.}
\label{exp:reacher}
\end{figure}

\section{Conclusion}

In this paper, we have proposed a new MDP with novel causal sparse
Tsallis entropy regularization which induces a sparse and multi-modal
optimal policy distribution. 
In addition, we have provided the full mathematical analysis of the
proposed sparse MDPs:
the optimality condition of sparse MDPs given as the sparse Bellman equation,
sparse value iteration and its convergence and optimality properties, and
the performance bounds between the propose MDP and the original MDP.
We have also proven that the performance gap of a sparse MDP is
strictly smaller than that of a soft MDP. 
In experiments, we have verified that the theoretical performance gaps of a
sparse MDP and soft MDP from the original MDP are correct. 
We have applied the sparsemax policy and sparse Bellman equation to
deep Q-learning as the exploration strategy and update rule,
respectively, and shown that the proposed exploration method shows
significantly better performance compared to $\epsilon$-greedy,
softmax exploration, and DDPG, as the number of actions increases.
From the analysis and experiments, we have demonstrated that the
proposed sparse MDP can be an efficient alternative to problems with a
large number of possible actions and even a continuous action space.

\appendices

\section{}\label{sec:analysis}

\subsection{Sparse Bellman Equation from Karush-Kuhn-Tucker conditions}\label{subsec:sp_bellman_eq}

The following proof explains the optimality condition of the sparse MDP from Karush-Kuhn-Tucker (KKT) conditions.

\begin{proof}[Proof of Theorem \ref{thm:sp_bellman_eqn}]
The KKT conditions of (\ref{eqn:sps_mdp}) are as follows:
\begin{eqnarray}\label{eqn:std_mdp_kkt1}
&\forall s, a &\sum_{a'}\pi(a'|s) - 1 = 0,\;\; -\pi(a|s) \leq 0 \\ \label{eqn:std_mdp_kkt2}
&\forall s,a &\lambda_{sa} \ge 0\\  \label{eqn:std_mdp_kkt3}
&\forall s,a &\lambda_{sa}\pi(a|s) = 0\\  \label{eqn:std_mdp_kkt4}
&\forall s,a &\frac{\partial L(\pi,c,\lambda)}{\partial \pi(a|s)} = 0  
\end{eqnarray}
where $c$ and $\lambda$  are Lagrangian multipliers for the equality and inequality constraints, respectively, and (\ref{eqn:std_mdp_kkt1}) is the feasibility of primal variables, (\ref{eqn:std_mdp_kkt2}) is the feasibility of dual variables, (\ref{eqn:std_mdp_kkt3}) is the complementary slackness and (\ref{eqn:std_mdp_kkt4}) is the stationarity condition.
The Lagrangian function of (\ref{eqn:sps_mdp}) is written as follows:
\begin{equation*}
\begin{aligned}
&L(\pi,c,\lambda) \\
&= -J^{sp}_{\pi} + \sum_{s} c_{s} \left(\sum_{a'}\pi(a'|s) - 1\right) - \sum_{s,a}\lambda_{sa}\pi(a|s)
\end{aligned}
\end{equation*}
where the maximization of (\ref{eqn:sps_mdp}) is changed into the minimization problem, i.e., $\min_{\pi} -J^{sp}_{\pi}$.
First, the derivative of $J^{sp}_{\pi}$ can be obtained by using the chain rule.
\begin{eqnarray*}
\begin{aligned}
&\frac{\partial J_{\pi}}{\partial \pi(a|s)} =
d^{\intercal}G_{\pi}^{-1}\frac{\partial r^{sp}_{\pi}}{\partial \pi(a|s)} + \gamma d^{\intercal}G_{\pi}^{-1}\frac{\partial T_{\pi}}{\partial \pi(a|s)}G_{\pi}^{-1}r^{sp}_{\pi}\\
&= \rho_{\pi}^{\intercal}\frac{\partial r^{sp}_{\pi}}{\partial \pi(a|s)}+\gamma\rho_{\pi}^{\intercal}\frac{\partial T_{\pi}}{\partial \pi(a|s)}V^{sp}_{\pi}\\
&= \rho_{\pi}(s)\left(r(s,a) + \frac{\alpha}{2} - \alpha\pi(a|s) + \gamma\sum_{s'}V^{sp}_{\pi}(s')T(s'|s,a)\right) \\
&= \rho_{\pi}(s)\left(Q_{\pi}^{sp}(s,a) + \frac{\alpha}{2} - \alpha\pi(a|s)\right).
\end{aligned}
\end{eqnarray*}
Here, the partial derivative of Lagrangian is obtained as follows:
\begin{eqnarray*}
\begin{aligned}
&\frac{\partial L(\pi,c,\lambda)}{\partial \pi(a|s)} \\
&= -\rho_{\pi}(s)(Q_{\pi}^{sp}(s,a) + \frac{\alpha}{2} - \alpha\pi(a|s)) + c_{s} - \lambda_{sa} = 0.
\end{aligned}
\end{eqnarray*}
First, consider a positive $\pi(a|s)$ where the corresponding Lagrangian multiplier $\lambda_{sa}$ is zero due to the complementary slackness.
By summing $\pi(a|s)$ with respect to action $a$, Lagrangian multiplier $c_{s}$ can be obtained as follows:
\begin{eqnarray*}
\small
\begin{aligned}
&0=-\rho_{\pi}(s)(Q_{\pi}^{sp}(s,a) + \frac{\alpha}{2} - \alpha\pi(a|s)) + c_{s}\\
&\pi(a|s) = \left(-\frac{c_{s}}{\rho_{\pi}(s)\alpha}+\frac{1}{2}+\frac{Q_{\pi}^{sp}(s,a)}{\alpha}\right)\\
&\sum_{\pi(a'|s) > 0}\pi(a'|s) = \sum_{\pi(a'|s) > 0}\left(-\frac{c_{s}}{\rho_{\pi}(s)\alpha}+\frac{1}{2}+\frac{Q_{\pi}^{sp}(s,a')}{\alpha}\right)=1\\
&\therefore c_{s} = \rho_{\pi}(s)\alpha\left[\frac{\sum_{\pi(a'|s)>0}\frac{Q_{\pi}^{sp}(s,a')}{\alpha} - 1}{K} + \frac{1}{2}\right]
\end{aligned}
\end{eqnarray*}
where $K$ is the number of positive elements of $\pi(\cdot|s)$.
By replacing $c_s$ with this result, the optimal policy distribution is induced as follows.
\begin{eqnarray*}
\begin{aligned}
\pi(a|s) &= \left(-\frac{c_{s}}{\rho_{\pi}(s)\alpha}+\frac{1}{2}+\frac{Q_{\pi}^{sp}(s,a)}{\alpha}\right)\\
&= \frac{Q_{\pi}^{sp}(s,a)}{\alpha} - \frac{\sum_{\pi(a'|s)>0}\frac{Q_{\pi}^{sp}(s,a')}{\alpha} - 1}{K}
\end{aligned}
\end{eqnarray*}
As this equation is derived under the assumption that $\pi(a|s)$ is positive.
For $\pi(a|s) > 0$, following condition is necessarily fulfilled,
\begin{eqnarray*}
\begin{aligned}
\frac{Q_{\pi}^{sp}(s,a)}{\alpha} > \frac{\sum_{\pi(a'|s)>0}\frac{Q_{\pi}^{sp}(s,a')}{\alpha} - 1}{K}.
\end{aligned}
\end{eqnarray*}
We notate this supporting set as $S(s) = \{a|1+K\frac{Q_{\pi}^{sp}(s,a)}{\alpha} > \sum_{\pi(a'|s)>0}\frac{Q_{\pi}^{sp}(s,a')}{\alpha}\}$.
$S(s)$ contains the actions which has larger action values than threshold 
\begin{equation*}
\tau(Q_{\pi}^{sp}(s,\cdot)) = \frac{\sum_{\pi(a'|s)>0}\frac{Q_{\pi}^{sp}(s,a')}{\alpha} - 1}{K}.
\end{equation*}
By using these notations, the optimal policy distribution can be rewritten as follows:
\begin{eqnarray*}
\begin{aligned}
\pi(a|s) = \max\left(\frac{Q_{\pi}^{sp}(s,a)}{\alpha} -\tau\left(\frac{Q_{\pi}^{sp}(s,\cdot)}{\alpha}\right),0\right).
\end{aligned}
\end{eqnarray*}
By substituting $\pi(a|s)$ with this result, the following optimality equation of $V^{sp}_{\pi}$ is induced.
\begin{eqnarray*}
\small
\begin{aligned}
&V^{sp}_{\pi}(s) \\
&= \sum_{a}\pi(a|s)\left(Q_{\pi}^{sp}(s,a) + \frac{\alpha}{2}(1-\pi(a|s))\right)\\
&= \sum_{a}\pi(a|s)\left(Q_{\pi}^{sp}(s,a) -\frac{\alpha}{2}\pi(a|s)\right) + \frac{\alpha}{2}\sum_{a}\pi(a|s)\\
&= \sum_{a\in S(s)}\pi(a|s)\\
&\times \left(Q_{\pi}^{sp}(s,a) -\frac{\alpha}{2}\left(\frac{Q_{\pi}^{sp}(s,a)}{\alpha} -\tau\left(\frac{Q_{\pi}^{sp}(s,\cdot)}{\alpha}\right)\right)\right) + \frac{\alpha}{2}\\
&= \sum_{a\in S(s)}\pi(a|s)\frac{\alpha}{2}\left(\frac{Q_{\pi}^{sp}(s,a)}{\alpha} + \tau\left(\frac{Q_{\pi}^{sp}(s,\cdot)}{\alpha}\right)\right) + \frac{\alpha}{2}\\
&=  \alpha\left[\frac{1}{2}\sum_{a \in S(s)}^{K}\left(\left(\frac{Q_{\pi}^{sp}(s,a)}{\alpha}\right)^2 - \tau\left(\frac{Q_{\pi}^{sp}(s,\cdot)}{\alpha}\right)^2\right) + \frac{1}{2}\right]
\end{aligned}
\end{eqnarray*}

To summarize, we obtain the sparse Bellman equation as follows:
\begin{eqnarray*}
\footnotesize
\begin{aligned}
Q_{\pi}^{sp}(s,a) &= r(s,a) + \gamma\sum_{s'} V^{sp}_{\pi}(s')T(s'|s,a)\\
V^{sp}_{\pi}(s) &= \alpha\left[\frac{1}{2}\sum_{a \in S(s)}^{K}\left(\left(\frac{Q_{\pi}^{sp}(s,a)}{\alpha}\right)^2 - \tau\left(\frac{Q_{\pi}^{sp}(s,\cdot)}{\alpha}\right)^2\right) + \frac{1}{2}\right]\\
\pi(a|s) &= \max\left(\frac{Q_{\pi}^{sp}(s,a)}{\alpha} -\tau\left(\frac{Q_{\pi}^{sp}(s,\cdot)}{\alpha}\right),0\right).
\end{aligned}
\end{eqnarray*}
\end{proof}

\subsection{Upper and Lower Bounds for Sparsemax Operation}\label{subsec:upper_lower_bnd_spmax}

In this section, we prove the lower and upper bounds of $\text{spmax}(z)$ defined in (\ref{eqn:spmax_dfn}).
We would like to mention that the proof of lower bound of (\ref{eqn:spmax_bnd}) is
provided in \cite{martins2016softmax}. 
However, we find another interesting way to prove
(\ref{eqn:spmax_bnd}) by using the Cauchy-Schwartz inequality and
the nonnegative property of a quadratic equation. 

We first prove $\max(z) \le \text{spmax}(z)$ and next prove $\text{spmax}(z) \le \max(z) + \frac{d-1}{2d}$.
Without loss of generality, we assume that $\alpha = 1$ but the original inequalities can be simply obtained by replacing $z$ with $\frac{z}{\alpha}$.
\begin{lower_bnd}
For all $z \in \mathbb{R}^{d}$, $\max(z) \le \textnormal{spmax}(z)$ holds.
\end{lower_bnd}

\begin{proof}
We prove that, for all $z$, $\text{spmax}(z)- z_{(1)} \ge 0$ where $z_{(1)} = \max(z)$ by definition.
The proof is done by simply rearranging the terms in (\ref{eqn:spmax_dfn}),
\begin{eqnarray*}
\small
\begin{aligned}
&\text{spmax}(z)- z_{(1)} \\
&= \frac{1}{2}\sum_{i = 1}^{K}\left(z_{(i)}^2 - \tau(z)^2\right) + \frac{1}{2}- z_{(1)}\\
&= \frac{1}{2}\sum_{i = 1}^{K}z_{(i)}^2 - \frac{K}{2}\left(\frac{\sum_{i=1}^{K}z_{(i)} - 1}{K}\right)^2 + \frac{1}{2}- z_{(1)}\\
&= \frac{1}{2}\sum_{i = 1}^{K}z_{(i)}^2 - \frac{1}{2K}\left(\sum_{i=1}^{K}z_{(i)} - 1\right)^2 + \frac{1}{2}- z_{(1)}\\
&= \frac{K\sum_{i = 1}^{K}z_{(i)}^2 - \left(\sum_{i=1}^{K}z_{(i)} - 1\right)^2 -2Kz_{(1)} + K}{2K}\\
&= \frac{1}{2K} \Bigg( Kz_{(1)}^2 + K\sum_{i = 2}^{K}z_{(i)}^2 \\
&-\left(z_{(1)} + \sum_{i=2}^{K}z_{(i)} - 1\right)^2 -2Kz_{(1)} + K \Bigg).
\end{aligned}
\end{eqnarray*}
The quadratic term can be decomposed as follows:
\begin{eqnarray*}
\small
\begin{aligned}
&\left(z_{(1)} + \sum_{i=2}^{K}z_{(i)} - 1\right)^2\\
& = z_{(1)}^2 + \left(\sum_{i=2}^{K}z_{(i)}\right)^2 + 1 + 2z_{(1)}\sum_{i=2}^{K}z_{(i)} - 2z_{(1)} - 2\sum_{i=2}^{K}z_{(i)}.
\end{aligned}
\end{eqnarray*}
By putting this result into the equation and rearranging them, three
terms are obtained as follows:{
\begin{eqnarray*}
\begin{aligned}
&\text{spmax}(z) - z_{(1)}\\
 &= \frac{1}{2K}\Bigg( (K-1)z_{(1)}^2 - 2z_{(1)}\left\{\sum_{i=2}^{K}z_{(i)} + K - 1\right\} \\
&+K\sum_{i = 2}^{K}z_{(i)}^2 + 2\sum_{i = 2}^{K}z_{(i)}+ K - \left(\sum_{i = 2}^{K}z_{(i)}\right)^2\Bigg).
\end{aligned}
\end{eqnarray*}
Then, $K\sum_{i = 2}^{K}z_{(i)}^2 + 2\sum_{i = 2}^{K}z_{(i)}+ K$ can be replaced with
$K\sum_{i = 2}^{K}\left(z_{(i)} + 1\right)^2- 2(K-1)\sum_{i = 2}^{K}z_{(i)}$ and
we also decompose the second term $- 2z_{(1)}\left\{\sum_{i=2}^{K}z_{(i)} + K - 1\right\}$ into two parts: $- 2z_{(1)}\left\{\sum_{i=2}^{K}(z_{(i)} +1)\right\}$  and $2z_{(1)}$,
and rearrange the equation as follows,
\begin{eqnarray*}
\small
\begin{aligned}
&=  \frac{1}{2K}\Bigg((K-1)z_{(1)}^2 - 2z_{(1)}\left\{\sum_{i=2}^{K}\left(z_{(i)} + 1\right)\right\}\\
&+K\sum_{i = 2}^{K}\left(z_{(i)} + 1\right)^2 - 2(K-1)\sum_{i = 2}^{K}z_{(i)} - \left(\sum_{i = 2}^{K}z_{(i)}\right)^2\Bigg).
\end{aligned}
\end{eqnarray*}
Again, we change $ - 2(K-1)\sum_{i = 2}^{K}z_{(i)} - \left(\sum_{i = 2}^{K}z_{(i)}\right)^2$ into
$- \left(\sum_{i = 2}^{K}(z_{(i)}+1)\right)^2 + (K-1)^2$ by adding and subtracting $(K-1)^2$ as follow,
\begin{eqnarray*}
\small
\begin{aligned}
&=\frac{1}{2K}\Bigg((K-1)z_{(1)}^2 - 2z_{(1)}\left\{\sum_{i=2}^{K}\left(z_{(i)} + 1\right)\right\}\\ \\
&+ K\sum_{i = 2}^{K}\left(z_{(i)} + 1\right)^2 - \left(\sum_{i = 2}^{K}(z_{(i)}+1)\right)^2 + (K-1)^2\Bigg).
\end{aligned}
\end{eqnarray*}
Then, the term $(K-1)z_{(1)}^2 - 2z_{(1)}\left\{\sum_{i=2}^{K}\left(z_{(i)} + 1\right)\right\}$ is reformulated
as $(K-1)\left(z_{(1)} - \frac{\sum_{i=2}^{K}\left(z_{(i)} + 1\right)}{K-1}\right)^2- (K-1)\left(\frac{\sum_{i=2}^{K}(z_{(i)+1})}{K-1}\right)^2$.
By using this reformulation, 
we can obtain following equation.
\begin{eqnarray*}
\small
\begin{aligned}
&=  \frac{(K-1)}{2K}\left[z_{(1)} - \frac{\sum_{i=2}^{K}\left(z_{(i)} + 1\right)}{K-1}\right]^2+\\
&\frac{1}{2K}\Bigg(- \frac{\left(\sum_{i=2}^{K}(z_{(i)+1})\right)^2}{K-1} + K\sum_{i = 2}^{K}\left(z_{(i)} + 1\right)^2- \left(\sum_{i = 2}^{K}(z_{(i)}+1)\right)^2\\
& + (K-1)^2\Bigg).
\end{aligned}
\end{eqnarray*}
Finally, we can obtain three terms by rearranging the above equation,
\begin{eqnarray*}
\small
\begin{aligned}
&= \frac{(K-1)}{2K}\left[z_{(1)} - \frac{\sum_{i=2}^{K}\left(z_{(i)} + 1\right)}{K-1}\right]^2\\
&+\frac{1}{2K}\Bigg(K\sum_{i = 2}^{K}\left(z_{(i)} + 1\right)^2  - K\frac{\left(\sum_{i=2}^{K}(z_{(i)}+1)\right)^2}{K-1}\Bigg) + \frac{(K-1)^2}{2K}\\
&= \frac{(K-1)}{2K}\left[z_{(1)} - \frac{\sum_{i=2}^{K}\left(z_{(i)} + 1\right)}{K-1}\right]^2\\
&+\frac{K-1}{2}\left[\sum_{i = 2}^{K}\frac{\left(z_{(i)} + 1\right)^2}{K-1}  - \left(\sum_{i=2}^{K}\frac{(z_{(i)}+1)}{K-1}\right)^2\right]+ \frac{(K-1)^2}{2K}
\end{aligned}
\end{eqnarray*}}
where the first and third terms are quadratic and always nonnegative. 
The second term is also always nonnegative by the Cauchy-Schwartz
inequality.  
The Cauchy-Schwartz inequality is written as $(\bold{p}^{\intercal}\bold{q})^2\le||\bold{p}||^2||\bold{q}||^2$. Let $z_{2:K} = [z_{(2)},\cdots,z_{(K)}]^{\intercal}$, then,
by setting $\bold{p} = z_{2:K} + \bold{1}$ and $\bold{q} =
\frac{1}{K-1}\bold{1}$ where $\bold{1}$ is a $K-1$ dimensional
vector of ones, it can be shown that the second term is nonnegative.
Therefore, $\text{spmax}(z) - z_{(1)}$ is always nonnegative for all
$z$ since three remaining terms are always nonnegative, completing
the proof.
\end{proof}

Now, we prove the upper bound of sparsemax operation.
\begin{upper_bnd}
For all $z \in \mathbb{R}^{d}$, $\textnormal{spmax}(z) \le \max(z) + \frac{d-1}{2d}$ holds.
\end{upper_bnd}

\begin{proof}
First, we decompose the summation of (\ref{eqn:spmax_dfn}) into two terms as follows:
\begin{eqnarray*}
\begin{aligned}
&\text{spmax}(z) =\frac{1}{2}\sum_{i = 1}^{K}\left(z_{(i)}^2 - \tau(z)^2\right) + \frac{1}{2} \\
&=\frac{1}{2}\sum_{i = 1}^{K}\left(z_{(i)} - \tau(z)\right)\left(z_{(i)} + \tau(z)\right) + \frac{1}{2}\\
&\leq\frac{1}{2}\sum_{i = 1}^{K}p_{i}^{*}(z)\left(z_{(i)} + \tau(z)\right) + \frac{1}{2}\\
&=\frac{1}{2}\sum_{i = 1}^{K}p_{i}^{*}(z)z_{(i)} + \frac{\tau(z)}{2}\sum_{i = 1}^{K}p_{i}^{*}(z) + \frac{1}{2}\\
&=\frac{1}{2}\sum_{i = 1}^{K}p_{i}^{*}(z)z_{(i)} + \frac{\tau(z)}{2} + \frac{1}{2}\\
&=\frac{1}{2}\sum_{i = 1}^{K}p_{i}^{*}(z)z_{(i)} + \frac{1}{2}\sum_{i = 1}^{K}\frac{z_{(i)}}{K} - \frac{1}{2K} + \frac{1}{2}
\end{aligned}
\end{eqnarray*}
where $p_{i}^{*} = \max(z_{(i)} - \tau(z),0)$ which is the optimal solution of the simplex projection problem (\ref{eqn:simplex_prob}) and $\sum_{i=1}^{K}p_{i}^{*}(z) = 1$ by definition.
Now, we use the fact that, for every $p$ on $d-1$ dimensional simplex, $\sum_{i}^{d}p_{i}z_{i} \leq \max(z)$ for all $z \in \mathbb{R}^{d}$.
By using this property, as $p^{*}(z)$ and $\frac{1}{K}\bold{1}$ are on the probability simplex, following inequality is induced,
\begin{eqnarray*}
\begin{aligned}
&\text{spmax}(z)
=\frac{1}{2}\sum_{i = 1}^{K}p_{i}^{*}(z)z_{(i)} + \frac{1}{2}\sum_{i = 1}^{K}\frac{z_{(i)}}{K} - \frac{1}{2K} + \frac{1}{2}\\
&\leq \frac{1}{2}\max(z) + \frac{1}{2}\max(z) - \frac{1}{2K} + \frac{1}{2}
\leq \max(z) - \frac{1}{2K} + \frac{1}{2} \\
&\leq \max(z) - \frac{1}{2d} + \frac{1}{2}
\end{aligned}
\end{eqnarray*}
where $d \ge K$ by definition of $K$.
Therefore, $\textnormal{spmax}(z) \le \max(z) + \frac{d-1}{2d}$ holds.
\end{proof}

\subsection{Comparison to \textit{Log-Sum-Exp}}\label{subsec:comp_lse_spmax}

We explain the error bounds for the \textit{log-sum-exp} operation and
compare it to the bounds of the sparsemax operation. 
The \textit{log-sum-exp} operation has widely known bounds,
\begin{eqnarray*}
\begin{aligned}
\max(z)\leq\text{logsumexp}(z)\leq\max(z) + \log(d).
\end{aligned}
\end{eqnarray*}
We would like to note that \textit{sparsemax} has tighter bounds than \textit{log-sum-exp}
as it is always satisfied that, for all $d > 1$, $\frac{d-1}{2d}\leq\log(d)$.
Intuitively, the approximation error of \textit{log-sum-exp} increases as the dimension of input space increases.
However, the approximation error of \textit{sparsemax} approaches to $\frac{1}{2}$ as the dimension of input space goes infinity.
This fact plays a crucial role in comparing performance error bounds of the sparse MDP and soft MDP.

\subsection{Causal Sparse Tsallis Entropy}\label{subsec:csp_ent}
The following proof shows that $W(\pi)$ is equivalent to the discounted expected sum of special case of Tsallis entropy when $q = 2$ and $k=\frac{1}{2}$.

\begin{proof}[Proof of Theorem \ref{thm:causal_tsallis_entropy}]
The proof is simply done by rewriting our regularization as follows:
\begin{eqnarray*}
\small
\begin{aligned}
&W(\pi) \\
&= \mathbb{E}\left[\sum_{t=0}^{\infty}\gamma^{t}\frac{1}{2}(1 - \pi(a_t|s_t))\middle|\pi,d,T\right]\\
&=  \sum_{s,a}\frac{1}{2}(1 - \pi(a|s))\mathbb{E}\left[\sum_{t=0}^{\infty}\gamma^{t}\mathbbm{1}_{\{s_t=s,a_t=a\}}\middle|\pi,d,T\right]\\
&=  \sum_{s,a}\frac{1}{2}(1 - \pi(a|s))\rho_{\pi}(s,a)\\
&= \sum_{s}\rho_{\pi}(s)\sum_{a}\frac{1}{2}(1 - \pi(a|s))\pi(a|s)\\
&= \sum_{s}\rho_{\pi}(s) \frac{1}{2}(\sum_{a} \pi(a|s)-\sum_{a} \pi(a|s)^{2})\\
&= \sum_{s}\rho_{\pi}(s) \frac{1}{2}(1-\sum_{a} \pi(a|s)^{2})\\
&=\sum_{s}S_{2,\frac{1}{2}}(\pi(\cdot|s))\rho_{\pi}(s) = \mathbb{E}_{\pi}\left[S_{2,\frac{1}{2}}(\pi(\cdot|s))\right].
\end{aligned}
\end{eqnarray*}
\end{proof}

\subsection{Convergence and Optimality of Sparse Value Iteration}\label{subsec:conv_opt_sp_value}

In this section, the monotonicity, discounting property, contraction of sparse Bellman operation $U^{sp}$ are proved.

\begin{proof}[Proof of Lemma \ref{lem:mon}]
In \cite{martins2016softmax}, the monotonicity of (\ref{eqn:spmax_dfn}) is proved.
Then, the monotonicity of $U^{sp}$ can be proved using (\ref{eqn:spmax_dfn}).
Let $x$ and $y$ are given such that $x \leq y$.
{
Then,
\begin{eqnarray*}
\small
\begin{aligned}
&\frac{r(s,a) + \gamma\sum_{s'} x(s')T(s'|s,a)}{\alpha} \leq \frac{r(s,a) + \gamma\sum_{s'} y(s')T(s'|s,a)}{\alpha}
\end{aligned}
\end{eqnarray*}
where $T(s'|s,a)$ is a transition probability which is always nonnegative.
Since the sparsemax operation is monotone, the following inequality is induced
\begin{eqnarray*}
\small
\begin{aligned}
&\alpha\text{spmax}\left(\frac{r(s,a) + \gamma\sum_{s'} x(s')T(s'|s,a)}{\alpha}\right) \\
&\leq \alpha\text{spmax}\left(\frac{r(s,a) + \gamma\sum_{s'} y(s')T(s'|s,a)}{\alpha}\right).
\end{aligned}
\end{eqnarray*}
Finally, we can obtain
\begin{eqnarray*}
\small
\begin{aligned}
&\therefore\;\; U^{sp}(x)\leq U^{sp}(y).
\end{aligned}
\end{eqnarray*}}
\end{proof}

\begin{proof}[Proof of Lemma \ref{lem:dis}]
In \cite{martins2016softmax}, it is shown that for $c \in \mathbb{R}$ and $x \in \mathbb{R}^{|\mathcal{S}|}$, $\text{spmax}(x+c\bold{1}) = \text{spmax}(x) + c\bold{1}$.
Using this property,
\begin{eqnarray*}
\small
\begin{aligned}
&U^{sp}(x + c\bold{1})(s)\\
 &= \alpha\text{spmax}\left(\frac{r(s,a) + \gamma\sum_{s'} (x(s')+c)T(s'|s,a)}{\alpha}\right)\\
&= \alpha\text{spmax}\left(\frac{r(s,a) + \gamma\sum_{s'}x(s')T(s'|s,a) + \gamma c \sum_{s'}T(s'|s,a)}{\alpha}\right)\\
&= \alpha\text{spmax}\left(\frac{r(s,a) + \gamma\sum_{s'}x(s')T(s'|s,a)}{\alpha} + \frac{\gamma c}{\alpha}\right)\\
&= \alpha\text{spmax}\left(\frac{r(s,a) + \gamma\sum_{s'}x(s')T(s'|s,a)}{\alpha}\right) + \gamma c\\
&\therefore\;\; U^{sp}(x+c\bold{1}) = U^{sp}(x)+\gamma c\bold{1}.
\end{aligned}
\end{eqnarray*}
\end{proof}

\begin{proof}[Proof of Lemma \ref{lem:fixed}]
First, we prove that $U^{sp}$ is a $\gamma$-contraction mapping with
respect to $d_{max}$. 
Without loss of generality, the proof is discussed for a general
function 
$\phi : \mathbb{R}^{|\mathcal{S}|} \rightarrow \mathbb{R}^{|\mathcal{S}|}$ 
with discounting and monotone properties.

Let $d_{max}(x,y) = M$. Then, $y-M\bold{1}\leq x\leq y + M\bold{1}$ is satisfied.
By monotone and discounting properties, the following inequality
between mappings $\phi(x)$ and $\phi(y)$ is established. 
\begin{eqnarray*}
\begin{aligned}
\phi(y)-\gamma M\bold{1}\leq \phi(x)\leq \phi(y) + \gamma M\bold{1},
\end{aligned}
\end{eqnarray*}
where $\gamma$ is a discounting factor of $\phi$.
From this inequality, $d_{max}(\phi(x),\phi(y)) \leq \gamma M = \gamma d_{max}(x,y)$ and $\gamma \in (0,1)$.
Therefore, $\phi$ is a $\gamma$-contraction mapping.
In our case, $U^{sp}$ is a $\gamma$-contraction mapping.

As $\mathbb{R}^{|\mathcal{S}|}$ and $d_{max}(x,y)$ are a non-empty complete metric space, 
by Banach fixed-point theorem, a $\gamma$-contraction mapping $U^{sp}$ has a unique fixed point.
\end{proof}

Using Lemma \ref{lem:mon}, Lemma \ref{lem:dis}, and Lemma
\ref{lem:fixed}, we can prove the convergence and optimality of
sparse value iteration. 

\begin{proof}[Proof of Theorem \ref{thm:opt_sps_mdp}]
Sparse value iteration converges into a fixed point of $U^{sp}$ by
the contraction property.
Let $x_{*}$ be a fixed point of $U^{sp}$ and, by definition of
$U^{sp}$, $x_{*}$ is the point that satisfies the sparse Bellman
equation, i.e. $x_{*} = U^{sp}(x_{*})$. 
Hence, by Theorem \ref{thm:sp_bellman_eqn}, $x_{*}$ satisfies
necessity conditions of the optimal solution.
By the Banach fixed point theorem, $x_{*}$ is a unique point which
satisfies necessity conditions of optimal solution.
In particular, $x_{*} = U^{sp}(x_{*})$ is precisely equivalent to the sparse Bellman equation.
In other words, there is no other point that satisfies the sparse
Bellman equation. 
Therefore, $x_{*}$ is the optimal value of a sparse MDP.
\end{proof}

\subsection{Performance Error Bounds for Sparse Value Iteration}\label{sec:error_bnd}

In this section, we prove the performance error bounds for sparse
value iteration and soft value iteration. 
We first show that the optimal values of a sparse MDP and a soft MDP
are greater than that of the original MDP. 

\begin{proof}[Proof of Lemma \ref{lem:operation}]
We first prove the inequality of the sparse Bellman operation
\begin{equation*}
U^{n}(x) \leq (U^{sp})^{n}(x),\;\;x_{*} \leq x_{*}^{sp}.
\end{equation*}
This inequality can be proven by the mathematical induction.
When $n=1$, the inequality is proven as follows:
\begin{eqnarray*}
&\max_{a'}\left(r(s,a') + \gamma \sum_{s'}x(s')T(s'|s,a')\right)\\
&\leq \text{spmax}\left(r(s,\cdot) + \gamma \sum_{s'}x(s')T(s'|s,\cdot)\right)\\
&(\because \max(z) \leq \text{spmax}(z)).
\end{eqnarray*}
Therefore,
\begin{eqnarray*}
&U(x) \leq U^{sp}(x).
\end{eqnarray*}
For some positive integer $k$, let us assume that $U^{k}(x) \leq (U^{sp})^{k}(x)$ holds for every $x \in \mathbb{R}^{|\mathcal{S}|}$.
Then, when $n = k+1$, 
\begin{eqnarray*}
U^{k+1}(x) &=& U^{k}(U(x)) \\
&\leq& (U^{sp})^{k}(U(x))\;\; (\because U^{k}(x) \leq (U^{sp})^{k}(x)) \\
 &\leq& (U^{sp})^{k}(U^{sp}(x))\;\; (\because U(x) \leq U^{sp}(x))\\
 &=& (U^{sp})^{k+1}(x).
\end{eqnarray*}
Therefore, by mathematical induction, it is satisfied $U^{n}(x) \leq (U^{sp})^{n}(x)$ for every positive integer $n$.
Then, the inequality of the fixed points of $U$ and $U^{sp}$ can be obtained by $n\rightarrow \infty$,
\begin{equation*}
x_{*} \leq x^{sp}_{*}
\end{equation*}
where $*$ indicates the fixed point.
The above arguments also hold when $U^{sp}$ and \textit{sparsemax} are replaced with $U^{soft}$ and \textit{log-sum-exp} operation, respectively.
\end{proof}

Before showing the performance error bounds, 
the upper bounds of $W(\pi)$ and $H(\pi)$ are proved first.


\begin{proof} [Proof of Lemma \ref{lem:reg_bnd}]
For $W(\pi)$,
\begin{eqnarray*}
\small
\begin{aligned}
&W(\pi) = \sum_{s}\rho_{\pi}(s)\sum_{a}\frac{1}{2}(1 - \pi(a|s))\pi(a|s)\\
&\leq \sum_{s}\rho_{\pi}(s)\frac{|\mathcal{A}|-1}{2|\mathcal{A}|}\;\;(\because \sum_{a}\frac{1}{2}(1 - \pi(a|s))\pi(a|s) \leq \frac{|\mathcal{A}|-1}{2|\mathcal{A}|})\\
&= \frac{1}{1-\gamma}\frac{|\mathcal{A}|-1}{2|\mathcal{A}|} \;\;(\because \sum_{s}\rho_{\pi}(s) = \frac{1}{1-\gamma}).
\end{aligned}
\end{eqnarray*}
The inequality that $\sum_{a}\frac{1}{2}(1 - \pi(a|s))\pi(a|s) \leq \frac{|\mathcal{A}|-1}{2|\mathcal{A}|}$ can be obtained by finding the point where the derivative of $\frac{1}{2}(1-x)x$ is zero.
Similarly, for $H(\pi)$,
\begin{eqnarray*}
\small
\begin{aligned}
&H(\pi) = \mathbb{E}\left[\sum_{t=0}^{\infty}\gamma^{t}-\log(\pi(a_t|s_t))\middle|\pi,d,T\right]\\ 
&=  \sum_{s,a}-\log(\pi(a|s))\mathbb{E}\left[\sum_{t=0}^{\infty}\gamma^{t}\mathbbm{1}_{\{s_t=s,a_t=a\}}\middle|\pi,d,T\right]\\
&=  \sum_{s,a}-\log(\pi(a|s))\rho_{\pi}(s,a)\\
&= \sum_{s}\rho_{\pi}(s)\sum_{a}-\log(\pi(a|s))\pi(a|s)\\
&\leq \sum_{s}\rho_{\pi}(s)\log(|\mathcal{A}|)\;\;(\because \sum_{a}-\log(\pi(a|s))\pi(a|s) \leq \log(|\mathcal{A}|))\\
&= \frac{1}{1-\gamma}\log(|\mathcal{A}|) \;\;(\because \sum_{s}\rho_{\pi}(s) = \frac{1}{1-\gamma}).
\end{aligned}
\end{eqnarray*}
The inequality that $\sum_{a}-\log(\pi(a|s))\pi(a|s) \leq \log(|\mathcal{A}|)$ also can be obtained by finding the point where the derivative of $-x\log(x)$ is zero.
\end{proof}

Using Lemma \ref{lem:operation} and Lemma \ref{lem:reg_bnd}, the error
bounds of sparse and soft value iterations can be proved. 


\begin{proof}[Proof of Theorem \ref{thm:bnd_sps_mdp}]
Let $\pi_{*}$ be the optimal policy of the original MDP, 
where the problem is defined as $\max_{\pi} \mathbb{E}_{\pi}(\mathbf{r}(s,a))$.
\begin{equation*}
\mathbb{E}_{\pi^{sp}_{*}}(\mathbf{r}(s,a)) \leq \max_{\pi} \mathbb{E}_{\pi}(\mathbf{r}(s,a)) = \mathbb{E}_{\pi_{*}}(\mathbf{r}(s,a)).
\end{equation*}
The rightside inequality is by the definition of optimality.
Before proving the leftside inequality, we first derive the following
inequality from Lemma \ref{lem:operation}: 
\begin{equation}\label{eqn:sps_value_ieq}
V_{*} \leq V^{sp}_{*},
\end{equation}
where $*$ indicates an optimal value.
Since the fixed points of $U$ and $U^{sp}$ are the optimal solutions
of the original MDP and sparse MDP, respectively,
(\ref{eqn:sps_value_ieq}) can be derived from Lemma \ref{lem:operation}. 
The leftside inequality is proved using (\ref{eqn:sps_value_ieq}) as follows:
\begin{eqnarray*}
\begin{aligned}
&\mathbb{E}_{\pi_{*}}(\mathbf{r}(s,a)) = d^{\intercal}V_{*} \\
&\leq d^{\intercal}V^{sp}_{*} = J^{sp}_{*} = \mathbb{E}_{\pi_{*}^{sp}}(\mathbf{r}(s,a)) + \alpha W(\pi_{*}^{sp})\\
&\leq \mathbb{E}_{\pi_{*}^{sp}}(\mathbf{r}(s,a)) + \frac{\alpha}{1-\gamma}\frac{|\mathcal{A}|-1}{2|\mathcal{A}|}\;\;
(\because\;\text{Lemma \ref{lem:reg_bnd}}).
\end{aligned}
\end{eqnarray*}
\end{proof}



\begin{proof}[Proof of Theorem \ref{thm:bnd_soft_mdp}]
Let $\pi_{*}$ be the optimal policy of the original MDP which is
defined as $\max_{\pi} \mathbb{E}_{\pi}(\mathbf{r}(s,a))$.
The rightside inequality is by the definition of optimality.
\begin{equation*}
\mathbb{E}_{\pi^{soft}_{*}}(\mathbf{r}(s,a)) \leq \max_{\pi} \mathbb{E}_{\pi}(\mathbf{r}(s,a)) = \mathbb{E}_{\pi_{*}}(\mathbf{r}(s,a)).
\end{equation*}
Before proving the leftside inequality, we first derive following
inequality from Lemma \ref{lem:operation}: 
\begin{equation}\label{eqn:soft_value_ieq}
V_{*} \leq V^{soft}_{*}
\end{equation}
where $*$ indicates an optimal solution.
Then, the proof of the leftside inequality is done by using (\ref{eqn:soft_value_ieq}) as follows:
\begin{eqnarray*}
\begin{aligned}
&\mathbb{E}_{\pi_{*}}(\mathbf{r}(s,a)) = d^{\intercal}V_{*} \\
&\leq d^{\intercal}V^{soft}_{*} = J^{soft}_{*} = \mathbb{E}_{\pi_{*}^{soft}}(\mathbf{r}(s,a)) + \alpha H(\pi_{*}^{soft})\\
&\leq \mathbb{E}_{\pi_{*}^{soft}}(\mathbf{r}(s,a)) + \frac{\alpha}{1-\gamma}\log(|\mathcal{A}|)\;\;
(\because\;\text{Lemma \ref{lem:reg_bnd}}).
\end{aligned}
\end{eqnarray*}
\end{proof}

\section{}\label{sec:exp_full}
In this section, we present the full experimental results of
reinforcement learning with a continuous action space.
We performe experiments on \textit{Inverted Pendulum} and
\textit{Reacher} and 28 algorithms are tested including our sparse
exploration method and sparse Bellman update rule. 

\begin{table*}[t]
\centering
\begin{tabular}{|l|l|l|l|l|l|}
\hline
The Number of Action & 3 & 101 & 1001 & 2001 & Average \\ \hline
Sparse+SparseBellman-1 & 1000.0 & 996.8 & 1000.0 & 1000.0 & \textbf{999.2} \\ \hline 
Sparse+SparseBellman-0.1 & 1000.0 & 933.1 & 668.2 & 1000.0 & 900.3 \\ \hline 
Sparse+SparseBellman-0.01 & 1000.0 & 992.1 & 1000.0 & 1000.0 & \textbf{998.0} \\ \hline 
Sparse+SoftBellman-1 & 1000.0 & 1000.0 & 925.2 & 1000.0 & 981.3 \\ \hline 
Sparse+SoftBellman-0.1 & 1000.0 & 1000.0 & 1000.0 & 1000.0 & \textbf{1000.0} \\ \hline 
Sparse+SoftBellman-0.01 & 782.7 & 988.6 & 775.8 & 1000.0 & 886.8 \\ \hline 
Sparse+Bellman-1 & 1000.0 & 1000.0 & 919.7 & 715.3 & 908.7 \\ \hline 
Sparse+Bellman-0.1 & 980.2 & 745.5 & 1000.0 & 1000.0 & 931.4 \\ \hline 
Sparse+Bellman-0.01 & 1000.0 & 1000.0 & 1000.0 & 1000.0 & \textbf{1000.0} \\ \hline 
Soft+SparseBellman-1 & 673.9 & 835.5 & 53.3 & 1000.0 & 640.7 \\ \hline 
Soft+SparseBellman-0.1 & 688.0 & 1000.0 & 938.0 & 904.2 & 882.6 \\ \hline 
Soft+SparseBellman-0.01 & 993.0 & 1000.0 & 736.4 & 1000.0 & 932.3 \\ \hline 
Soft+SoftBellman-1 & 939.6 & 738.3 & 506.3 & 943.2 & 781.9 \\ \hline 
Soft+SoftBellman-0.1 & 1000.0 & 1000.0 & 1000.0 & 681.6 & 920.4 \\ \hline 
Soft+SoftBellman-0.01 & 1000.0 & 974.8 & 1000.0 & 1000.0 & 993.7 \\ \hline 
Soft+Bellman-1 & 668.9 & 621.5 & 668.7 & 643.2 & 650.6 \\ \hline 
Soft+Bellman-0.1 & 1000.0 & 1000.0 & 1000.0 & 1000.0 & \textbf{1000.0} \\ \hline 
Soft+Bellman-0.01 & 977.6 & 1000.0 & 1000.0 & 1000.0 & 994.4 \\ \hline 
EpsGrdy+SparseBellman-1 & 479.8 & 669.0 & 344.5 & 678.1 & 542.9 \\ \hline 
EpsGrdy+SparseBellman-0.1 & 668.1 & 1000.0 & 351.1 & 666.6 & 671.4 \\ \hline 
EpsGrdy+SparseBellman-0.01 & 1000.0 & 124.6 & 477.5 & 667.8 & 567.5 \\ \hline 
EpsGrdy+SoftBellman-1 & 940.3 & 684.9 & 658.3 & 505.6 & 697.3 \\ \hline 
EpsGrdy+SoftBellman-0.1 & 338.5 & 376.8 & 1000.0 & 1000.0 & 678.8 \\ \hline 
EpsGrdy+SoftBellman-0.01 & 551.5 & 652.8 & 735.2 & 677.9 & 654.3 \\ \hline 
EpsGrdy+Bellman-1 & 332.7 & 1000.0 & 1000.0 & 369.8 & 675.6 \\ \hline 
EpsGrdy+Bellman-0.1 & 1000.0 & 618.7 & 1000.0 & 771.7 & 847.6 \\ \hline 
EpsGrdy+Bellman-0.01 & 462.6 & 676.5 & 698.0 & 48.1 & 471.3 \\ \hline 
DDPG                                        & \multicolumn{4}{l|}{253.1}&253.1 \\ \hline
\end{tabular}
\caption{Expected return of \textit{Inverted Pendulum}. Top five performances are marked in bold. }
\end{table*}

\begin{table*}[t]
\centering
\begin{tabular}{|l|l|l|l|l|}
\hline
The Number of Action & 3 & 101 & 1001 & 2001 \\ \hline
Sparse+SparseBellman-1 & 1164 & 692 & {742} & 864 \\ \hline 
Sparse+SparseBellman-0.1 & 1060 & 2923 & 3998 & 599 \\ \hline 
Sparse+SparseBellman-0.01 & 685 & 1431 & 1010 & 811 \\ \hline 
Sparse+SoftBellman-1 & 863 & 1316 & 1698 & 657 \\ \hline 
Sparse+SoftBellman-0.1 & 914 & 901 & 857 & 802 \\ \hline 
Sparse+SoftBellman-0.01 & 3342 & 907 & 3930 & {522} \\ \hline 
Sparse+Bellman-1 & 879 & 668 & 2337 & 3137 \\ \hline 
Sparse+Bellman-0.1 & 937 & 3925 & 773 & 1030 \\ \hline 
Sparse+Bellman-0.01 & 999 & {329} & 962 & 962 \\ \hline 
Soft+SparseBellman-1 & 3789 & 3416 & 3996 & 2684 \\ \hline 
Soft+SparseBellman-0.1 & 3844 & 2835 & 1494 & 2771 \\ \hline 
Soft+SparseBellman-0.01 & 854 & 545 & 3814 & 999 \\ \hline 
Soft+SoftBellman-1 & 1885 & 3666 & 3994 & 3912 \\ \hline 
Soft+SoftBellman-0.1 & 869 & 780 & 787 & 3871 \\ \hline 
Soft+SoftBellman-0.01 & {533} & 1241 & 2565 & 3020 \\ \hline 
Soft+Bellman-1 & 3898 & 3947 & 3978 & 3758 \\ \hline 
Soft+Bellman-0.1 & 876 & 1923 & 954 & 807 \\ \hline 
Soft+Bellman-0.01 & 1419 & 689 & 755 & 1265 \\ \hline 
EpsGrdy+SparseBellman-1 & 3978 & 3993 & 4000 & 3863 \\ \hline 
EpsGrdy+SparseBellman-0.1 & 3895 & 2449 & 4000 & 3910 \\ \hline 
EpsGrdy+SparseBellman-0.01 & 3437 & 4000 & 3962 & 3777 \\ \hline 
EpsGrdy+SoftBellman-1 & 2959 & 3919 & 3715 & 4000 \\ \hline 
EpsGrdy+SoftBellman-0.1 & 3997 & 3969 & 3037 & 2509 \\ \hline 
EpsGrdy+SoftBellman-0.01 & 3976 & 3936 & 3785 & 3784 \\ \hline 
EpsGrdy+Bellman-1 & 4000 & 2603 & 1093 & 3969 \\ \hline 
EpsGrdy+Bellman-0.1 & 2584 & 3897 & 3160 & 3846 \\ \hline 
EpsGrdy+Bellman-0.01 & 3891 & 3699 & 3905 & 3993 \\ \hline 
\end{tabular}
\caption{The number of episodes required to reach the threshold return, 980.}
\end{table*}

\begin{table*}[t]
\centering
\begin{tabular}{|l|l|l|l|l|l|}
\hline
The Number of Action & 9 & 121 & 961 & 2601 & Average \\ \hline
Sparse+SparseBellman-1 & -7.7 & -7.8 & -10.1 & -11.5 & -9.3 \\ \hline 
Sparse+SparseBellman-0.1 & -11.3 & -5.7 & -5.4 & -5.5 & \textbf{-7.0} \\ \hline 
Sparse+SparseBellman-0.01 & -11.3 & -8.7 & -8.6 & -6.3 & -8.7 \\ \hline 
Sparse+SoftBellman-1 & -7.6 & -10.5 & -11.5 & -10.0 & -9.9 \\ \hline 
Sparse+SoftBellman-0.1 & -10.4 & -5.8 & -5.5 & -9.3 & -7.8 \\ \hline 
Sparse+SoftBellman-0.01 & -11.2 & -6.4 & -8.9 & -6.4 & -8.2 \\ \hline 
Sparse+Bellman-1 & -7.6 & -7.7 & -5.7 & -10.2 & -7.8 \\ \hline 
Sparse+Bellman-0.1 & -10.8 & -5.5 & -5.4 & -5.8 & \textbf{-6.9} \\ \hline 
Sparse+Bellman-0.01 & -11.6 & -5.9 & -5.9 & -9.4 & -8.2 \\ \hline 
Soft+SparseBellman-1 & -52.0 & -48.0 & -29.6 & -39.3 & -42.2 \\ \hline 
Soft+SparseBellman-0.1 & -7.4 & -22.4 & -20.8 & -25.5 & -19.0 \\ \hline 
Soft+SparseBellman-0.01 & -11.1 & -5.5 & -5.5 & -9.2 & -7.8 \\ \hline 
Soft+SoftBellman-1 & -52.2 & -43.1 & -46.8 & -44.1 & -46.5 \\ \hline 
Soft+SoftBellman-0.1 & -7.5 & -22.5 & -23.9 & -32.9 & -21.7 \\ \hline 
Soft+SoftBellman-0.01 & -11.6 & -5.7 & -5.5 & -7.6 & -7.6 \\ \hline 
Soft+Bellman-1 & -51.4 & -51.7 & -44.2 & -41.2 & -47.1 \\ \hline 
Soft+Bellman-0.1 & -7.1 & -10.0 & -26.7 & -27.5 & -17.8 \\ \hline 
Soft+Bellman-0.01 & -11.3 & -5.3 & -5.3 & -10.2 & -8.0 \\ \hline 
EpsGrdy+SparseBellman-1 & -11.2 & -7.6 & -5.6 & -6.2 & -7.6 \\ \hline 
EpsGrdy+SparseBellman-0.1 & -11.2 & -5.9 & -5.8 & -6.1 & -7.2 \\ \hline 
EpsGrdy+SparseBellman-0.01 & -10.9 & -5.9 & -5.5 & -6.0 & \textbf{-7.1} \\ \hline 
EpsGrdy+SoftBellman-1 & -10.5 & -5.9 & -5.7 & -6.1 & \textbf{-7.0} \\ \hline 
EpsGrdy+SoftBellman-0.1 & -11.3 & -5.7 & -5.6 & -6.2 & \textbf{-7.2} \\ \hline 
EpsGrdy+SoftBellman-0.01 & -10.8 & -6.2 & -12.1 & -9.5 & -9.6 \\ \hline 
EpsGrdy+Bellman & -10.8 & -6.5 & -5.7 & -6.5 & -7.4 \\ \hline 
EpsGrdy+Bellman & -11.1 & -6.2 & -5.9 & -5.9 & -7.3 \\ \hline 
EpsGrdy+Bellman & -10.6 & -9.4 & -8.4 & -6.5 & -8.7 \\ \hline 
DDPG                                        & \multicolumn{4}{l|}{  -10.1}&-10.1 \\ \hline
\end{tabular}
\caption{Expected return of \textit{Reacher}. Top five performances are marked in bold.}
\end{table*}

\begin{table*}[]
\centering
\begin{tabular}{|l|l|l|l|l|}
\hline
The Number of Action & 9 & 121 & 961 & 2601 \\ \hline
Sparse+SparseBellman-1 & 9193 & 9648 & 7275 & 9065 \\ \hline 
Sparse+SparseBellman-0.1 & 9791 & 5837 & 5779 & {6851} \\ \hline 
Sparse+SparseBellman-0.01 & 9783 & 6456 & 6631 & 7941 \\ \hline 
Sparse+SoftBellman-1 & 9126 & 9834 & 7603 & 8503 \\ \hline 
Sparse+SoftBellman-0.1 & 9779 & 5449 & 5642 & 7509 \\ \hline 
Sparse+SoftBellman-0.01 & 9795 & 5011 & 7260 & 7768 \\ \hline 
Sparse+Bellman-1 & 9073 & 9619 & 5646 & 8371 \\ \hline 
Sparse+Bellman-0.1 & 9756 & 5366 & {5338} & 6936 \\ \hline 
Sparse+Bellman-0.01 & 9797 & 5204 & 6525 & 7965 \\ \hline 
Soft+SparseBellman-1 & 10000 & 10000 & 10000 & 10000 \\ \hline 
Soft+SparseBellman-0.1 & 8801 & 9998 & 10000 & 10000 \\ \hline 
Soft+SparseBellman-0.01 & 9783 & 4988 & 5934 & 8774 \\ \hline 
Soft+SoftBellman-1 & 10000 & 10000 & 10000 & 10000 \\ \hline 
Soft+SoftBellman-0.1 & 8810 & 9999 & 10000 & 10000 \\ \hline 
Soft+SoftBellman-0.01 & 9794 & 4597 & 5927 & 7915 \\ \hline 
Soft+Bellman-1 & 10000 & 10000 & 10000 & 10000 \\ \hline 
Soft+Bellman-0.1 & {8700} & 9999 & 10000 & 10000 \\ \hline 
Soft+Bellman-0.01 & 9790 & {4810} & 6004 & 8737 \\ \hline 
EpsGrdy+SparseBellman-1 & 9861 & 6909 & 6994 & 7977 \\ \hline 
EpsGrdy+SparseBellman-0.1 & 9850 & 6808 & 6775 & 7873 \\ \hline 
EpsGrdy+SparseBellman-0.01 & 9847 & 7079 & 6850 & 7923 \\ \hline 
EpsGrdy+SoftBellman-1 & 9850 & 6839 & 6858 & 8026 \\ \hline 
EpsGrdy+SoftBellman-0.1 & 9844 & 6918 & 6752 & 7849 \\ \hline 
EpsGrdy+SoftBellman-0.01 & 9841 & 7176 & 9803 & 8114 \\ \hline 
EpsGrdy+Bellman & 9842 & 6797 & 6933 & 8001 \\ \hline 
EpsGrdy+Bellman & 9846 & 6680 & 7051 & 7845 \\ \hline 
EpsGrdy+Bellman & 9864 & 7192 & 6928 & 7925 \\ \hline 
\end{tabular}
\caption{The number of episodes required to reach the threshold return, -6.}
\end{table*}

\bibliographystyle{IEEEtran}
\bibliography{bib_sparse_mdp}

\end{document}